\newtheorem{thm}{Theorem}[section]
\newtheorem{lemma}[thm]{Lemma}
\newtheorem{cor}[thm]{Corollary}
\newtheorem{prop}[thm]{Proposition}
\newtheorem*{thm*}{Theorem}
\newtheorem*{lemma*}{Lemma}
\newtheorem*{cor*}{Corollary}
\newtheorem*{prop*}{Proposition}
\newtheorem*{conjecture*}{Conjecture}
\theoremstyle{definition}
\newtheorem*{defn*}{Definition}
\theoremstyle{definition}
\theoremstyle{definition}
\theoremstyle{remark}
\newtheorem*{ex*}{Example}
\theoremstyle{definition}
\theoremstyle{definition}
\newtheorem*{assm*}{Assumption}
\theoremstyle{remark}
\theoremstyle{remark}
\newtheorem*{remark*}{Remark}
\DeclareFontFamily{U}{mathx}{\hyphenchar\font45}
\DeclareFontShape{U}{mathx}{m}{n}{
      <5> <6> <7> <8> <9> <10> gen * mathx
      <10.95> mathx10 <12> <14.4> <17.28> <20.74> <24.88> mathx12
      }{}
\DeclareSymbolFont{mathx}{U}{mathx}{m}{n}
\DeclareMathSymbol{\intop}  {1}{mathx}{"B3}
\newcommand{\wt}{\widetilde}
\newcommand{\wh}{\widehat}
\newcommand{\gap}{\Delta}
\newcommand{\ind}{1}
\let\temp\phi
\let\phi\varphi
\let\varphi\temp
\renewcommand{\sec}{\textsection}
\newcommand{\pr}{\mathbb{P}}
\newcommand{\R}{\mathbb{R}}
\newcommand{\E}{\mathbb{E}}
\newcommand{\normalN}{\mathcal{N}}
\newcommand{\given}{\,|\,}  
\newcommand{\symdiff}{\triangle}
\DeclareMathOperator{\KL}{KL}
\DeclareMathOperator*{\argmax}{arg\,max}
\DeclareMathOperator{\BinomialDist}{Bin}
\newcommand{\dens}{F}
\newcommand{\denscmp}{f}
\newcommand{\mix}{\Lambda}
\newcommand{\wgt}{\lambda}
\newcommand{\truemix}{\mix^{*}}
\newcommand{\truewgt}{\wgt^{*}}
\newcommand{\truedens}{\dens^{*}}
\newcommand{\truedenscmp}{\denscmp^{*}}
\newcommand{\estmix}{\wh{\mix}}
\newcommand{\estwgt}{\wh{\wgt}}
\newcommand{\estdenscmp}{\wh{\denscmp}}
\renewcommand{\Xi}{X^{(i)}}
\newcommand{\Yi}{Y^{(i)}}
\newcommand{\baseX}{\mathcal{X}}
\newcommand{\baseY}{\mathcal{Y}}
\newcommand{\class}{\alpha}
\newcommand{\alldens}{\mathcal{P}}
\newcommand{\mixing}{\mathcal{M}}
\newcommand{\wass}{W_{1}}
\newcommand{\coupling}{\sigma}
\newcommand{\dTV}{d_{\textup{TV}}}
\newcommand{\dec}{\mathcal{D}}
\newcommand{\truedec}{\dec^{*}}
\newcommand{\estdec}{\wh{\dec}}
\newcommand{\perm}{\pi}
\newcommand{\trueperm}{\perm^{*}}
\newcommand{\estperm}{\wh{\perm}}
\newcommand{\identperm}{\wt{\perm}}
\newcommand{\clf}{g}
\newcommand{\trueclf}{\clf^{*}}
\newcommand{\uclf}{\check{\clf}}
\newcommand{\estclf}{\wh{\clf}}
\newcommand{\nll}{\ell}
\newcommand{\estmle}{\estperm_{\textup{MLE}}}
\newcommand{\gapmle}{\gap_{\textup{MLE}}}
\newcommand{\estmv}{\estperm_{\textup{MV}}}
\newcommand{\gapmv}{\gap_{\textup{MV}}}
\newcommand{\estgreedy}{\estperm_{\textup{G}}}
\title{\Large{Sample Complexity of Nonparametric Semi-Supervised Learning}}
\author[]{Chen Dan}
\author[]{Liu Leqi}
\author[]{Bryon Aragam}
\author[]{Pradeep Ravikumar}
\author[]{Eric P. Xing}
\affil[]{\emph{Carnegie Mellon University}}
\begin{document}
\maketitle


\begin{abstract}
We study the sample complexity of semi-supervised learning (SSL)
and introduce new assumptions based on the mismatch between a mixture model learned from unlabeled data and the true mixture model induced by the (unknown) class conditional distributions.
Under these assumptions, we establish an $\Omega(K\log K)$ labeled sample complexity bound without imposing parametric assumptions, where $K$ is the number of classes. Our results suggest that even in nonparametric settings it is possible to learn a near-optimal classifier using only a few labeled samples. Unlike previous theoretical work which focuses on binary classification, we consider general multiclass classification ($K>2$), which requires solving a difficult permutation learning problem. 
This permutation defines a classifier whose classification error is controlled by the Wasserstein distance between mixing measures, and we provide finite-sample results characterizing the behaviour of the excess risk of this classifier. 
Finally, we describe three algorithms for computing these estimators based on a connection to bipartite graph matching, and perform experiments to illustrate the superiority of the MLE over the majority vote estimator.
\end{abstract}

\section{Introduction}
\label{sec:intro}

With the rapid growth of modern datasets and increasingly passive collection of data, labeled data is becoming more and more expensive to obtain while unlabeled data remains cheap and plentiful in many applications. 
Leveraging unlabeled data to improve the predictions of a machine learning system is the problem of semi-supervised learning (SSL), which has been the source of many empirical successes \citep{blum1998,kingma2014ssl,dai2017} and theoretical inquiries \citep{azizyan2013,castelli1995,castelli1996,cozman2003,kaariainen2005,niyogi2013,rigollet2007,singh2009,wasserman2008,zhu2003}. 
Commonly studied assumptions include identifiability of the class conditional distributions \citep{castelli1995,castelli1996}, the cluster assumption \citep{rigollet2007,singh2009} and the manifold assumption \citep{zhu2003,wasserman2008,niyogi2013}. In this work, we propose a new type of assumption that loosely combines ideas from both the identifiability and cluster assumption perspectives. Importantly, we consider the general multiclass ($K>2$) scenario, which introduces significant complications. In this setting, we study the sample complexity and rates of convergence for SSL and propose simple algorithms to implement the proposed estimators.

The basic question behind SSL is to connect the marginal distribution over the unlabeled data $\pr(X)$ to the regression function $\pr(Y\given X)$. 
We consider multiclass classification, so that $Y\in\baseY=\{\class_{1},\ldots,\class_{K}\}$ for some $K\ge 2$. In order to motivate our perspective, let $\truedens$ denote the marginal density of the unlabeled samples and suppose that $\truedens$ can be written as a mixture model
\begin{align}
\label{eq:ident:mix}
\truedens(x)
=\sum_{b=1}^{K}\wgt_{b}\denscmp_{b}(x).
\end{align}

\noindent
Crucially, we \emph{do not} assume that each $\denscmp_{b}$ corresponds to some $\truedenscmp_{k}$, where $\truedenscmp_{k}$ is the density of the $k$th class conditional $\pr(X\given Y=\class_{k})$. Nor do we assume that $\wgt_{b}$ corresponds to some $\truewgt_{k}$ where $\truewgt_{k}=\pr(Y=\class_{k})$. We assume that the number of mixture components $K$ is the same as the number of classes.
Assuming the unlabeled data can be used to learn the mixture model \eqref{eq:ident:mix}, the question becomes \emph{when is this mixture model useful for predicting $Y$?} Figure~\ref{fig:mainidea} illustrates an idealized example.

\begin{figure}
\centering
\begin{subfigure}[t]{0.3\textwidth}
\centering
\includegraphics[width=\textwidth]{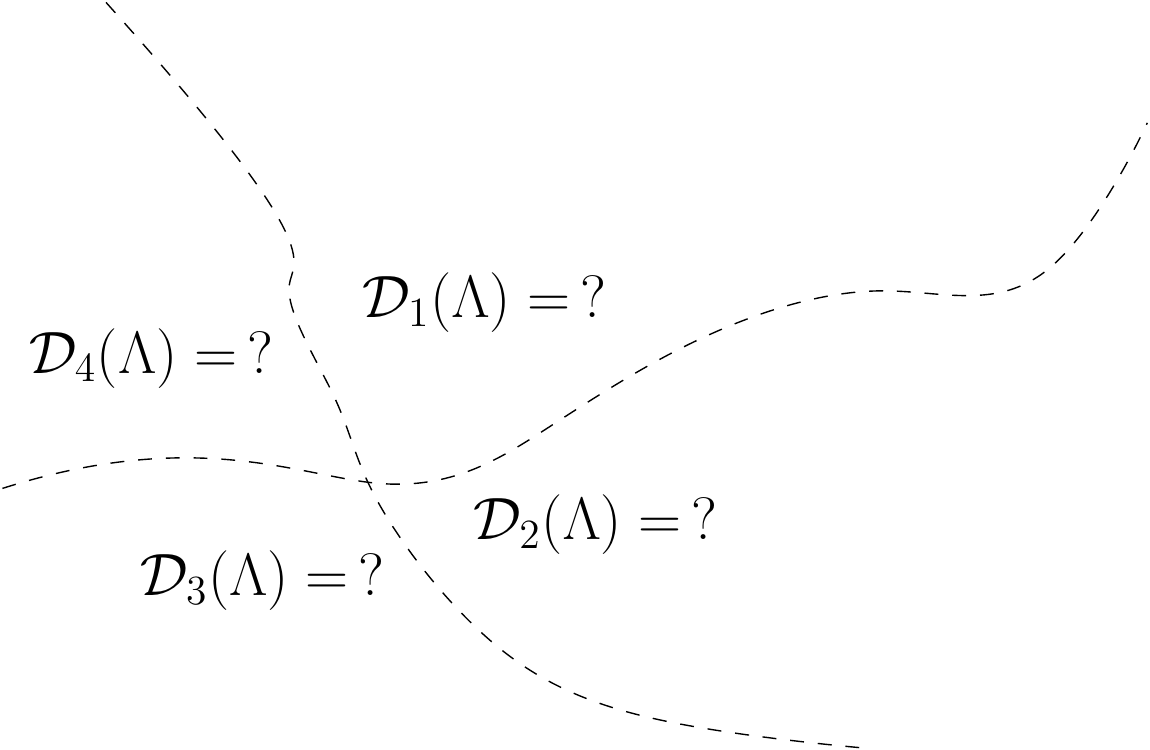} 
\caption{Unknown class assignment.}
\label{fig:mainidea:perm}
\end{subfigure}%
~
\begin{subfigure}[t]{0.3\textwidth}
\centering
\includegraphics[width=\textwidth]{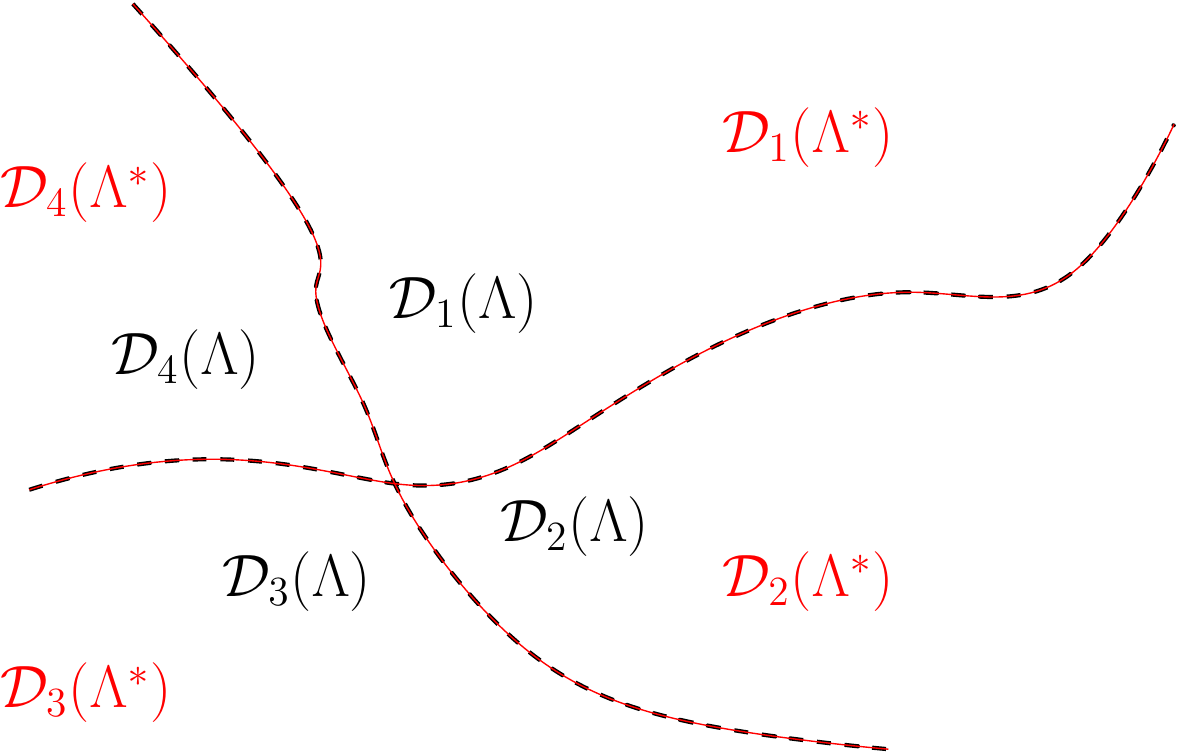}
\caption{True decision boundaries (red) are exactly identified.}
\label{fig:mainidea:prev}
\end{subfigure}%
~
\begin{subfigure}[t]{0.3\textwidth}
\centering
\includegraphics[width=\textwidth]{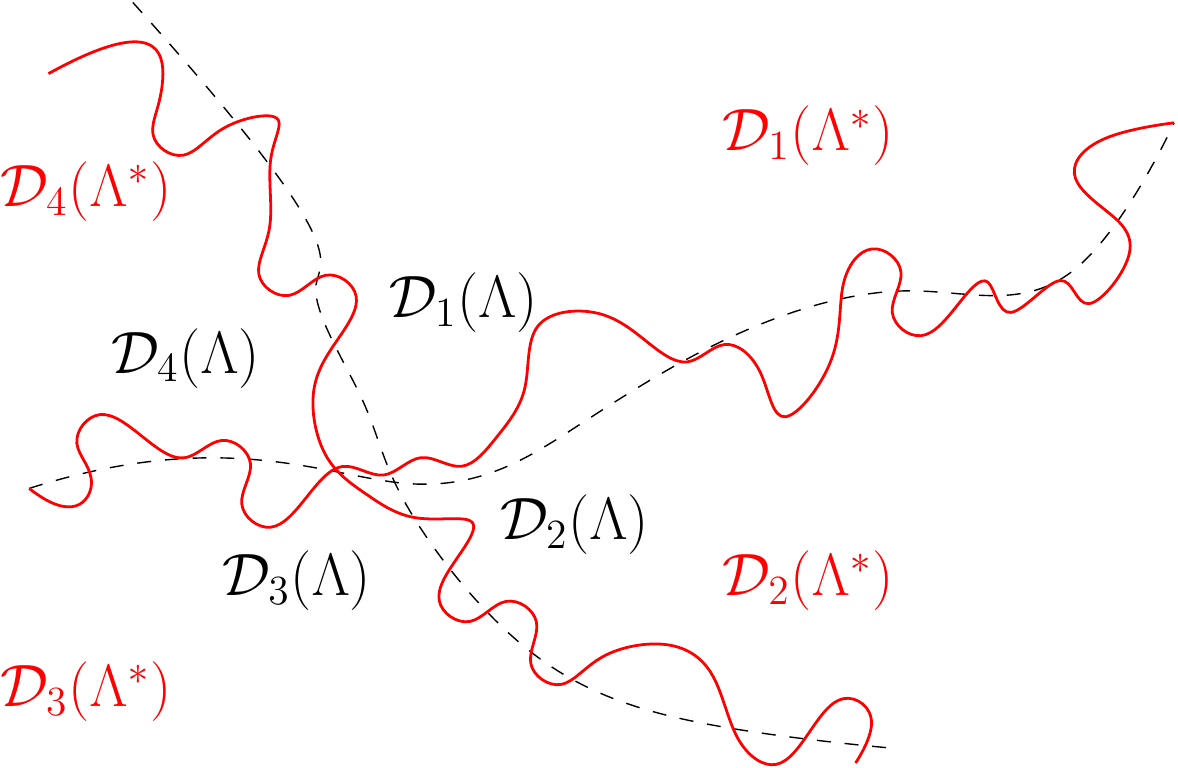}
\caption{True decision boundaries (red) are approximately identified.}
\label{fig:mainidea:current}
\end{subfigure}%
\caption{Illustration of the main idea for $K=4$. The decision boundaries learned from the unlabeled data (cf. \eqref{eq:ident:mix}) are depicted by the dashed black lines and the true decision boundaries are depicted by the solid red lines. 
(a) The unlabeled data is used to learn some approximate decision boundaries through the mixture model $\mix$. Even with the decision boundaries, it is not known which class each region corresponds to. The labeled data is used to learn this assignment.
(b) Previous work assumes that the true and approximate decision boundaries are the same.
(c) In the current work, we assume that the true decision boundaries are unknown, but that it is possible to learn a mixture model that approximates the true boundaries using unlabeled data. 
}
\label{fig:mainidea}
\end{figure}

In an early series of papers, \citet{castelli1995,castelli1996} considered this question under the following assumptions: (a) For each $b$ there is some $k$ such that $\denscmp_{b}=\truedenscmp_{k}$ and $\wgt_{b}=\truewgt_{k}$, (b) $\truedens$ is known, and (c) $K=2$. Thus, they assumed that the true components and weights were known but it was unknown which class each mixture component represents. In Figure~\ref{fig:mainidea}, this corresponds to the case (b) where the decision boundaries are identical. Given labeled data, the special case $K=2$ reduces to a simple hypothesis testing problem which can be tackled using the Neyman-Pearson lemma. 
In this paper, we are interested in settings where each of these three assumptions fail:
\begin{itemize}
\item[(a)] \emph{What if the class conditionals $\truedenscmp_{k}$ are unknown?}
Although we can always write $\truedens(x)=\sum_{k}\truewgt_{k}\truedenscmp_{k}(x)$, it is generally not the case that this mixture model is learnable from unlabeled data alone. In practice, what is learned will be different from this ideal case, but the hope is that it will still be useful. In this case, the argument in \citet{castelli1995} breaks down. Motivated by recent work on nonparametric mixture models \citep{aragam2018npmix}, we study the general case where the true mixture model is not known or even learnable from unlabeled data.
\item[(b)] \emph{What if $\truedens$ is unknown?} In a follow-up paper, \citet{castelli1996} studied the case where $\truedens$ is unknown by assuming that $K=2$ and the class conditional densities $\{\truedenscmp_{1},\truedenscmp_{2}\}$ are known up to a permutation. In this setting, the unlabeled data is used to ascertain the relative mixing proportions, but estimation error in the densities is not considered. We are interested in the general case in which a finite amount of unlabeled data is used to estimate both the mixture weights and densities.
\item[(c)] \emph{What if $K>2$?} If $K>2$, once again the argument in \citet{castelli1995} no longer applies, and we are faced with a challenging permutation learning problem. Permutation learning problems have gained notoriety recently owing to their applicability to a wide variety of problems, including statistical matching and seriation \citep{collier2016,fogel2013,lim2014}, graphical models \citep{geer2013,aragam2016}, and regression \citep{pananjady2016,flammarion2016}, so these results may be of independent interest. 
\end{itemize}
\noindent
With these goals in mind, we study the MLE and majority voting (MV) rules for learning the unknown class assignment introduced in the next section. Our assumptions for MV are closely related to recent work based on the so-called cluster assumption \citep{seeger2000,singh2009,rigollet2007,azizyan2013}; see Section~\ref{sec:theory:vote} for more details.

\paragraph{Contributions} A key aspect of our analysis is to establish conditions that connect the mixture model \eqref{eq:ident:mix} to the true mixture model. Under these conditions we prove nonasymptotic rates of convergence for learning the class assignment (Figure~\ref{fig:mainidea:perm}) from labeled data when $K>2$, establish an $\Omega(K\log K)$ sample complexity for learning this assignment, and prove that the resulting classifier converges to the Bayes classifier. We then propose simple algorithms based on a connection to bipartite graph matching, and illustrate their performance on real and simulated data.

\section{SSL as permutation learning}
\label{sec:perm}

In this section, we formalize the ideas from the introduction using the language of mixing measures. We adopt this language for several reasons: 1) It makes it easy to refer to the parameters in the mixture model \eqref{eq:ident:mix} by wrapping everything into a single, coherent statistical parameter $\mix$, 2) We can talk about convergence of these parameters via the Wasserstein metric, and 3) It simplifies discussions of identifiability in mixture models. Before going into technical details, we summarize the main idea as follows (see also Figure~\ref{fig:mainidea}):
\begin{enumerate}
\item Use the unlabeled data to learn a $K$-component mixture model that approximates $\truedens$, which is represented by the mixing measure $\mix$ defined below;
\item Use the labeled data to determine the correct assignment $\perm$ of classes $\class_{k}$ to the decision regions $\dec_{b}(\mix)$ defined by $\mix$;
\item Based on the pair $(\mix,\perm)$, define a classifier $\clf_{\mix,\perm}:\baseX\to\baseY$ by \eqref{eq:def:classifier} below.
\end{enumerate}

\paragraph{Mixing measures and mixture models}
For concreteness, we will work on $\baseX=\R^{d}$, however, our results generalize naturally to any space $\baseX$ with a dominating measure and well-defined density functions. Let $\alldens=\{f\in L^{1}(\R^{d}) : \int f\,dx=1\}$ be the set of probability density functions on $\R^{d}$, and $\mixing_{K}(\alldens)$ denote the space of probability measures over $\alldens$ with precisely $K$ atoms. An element $\mix\in\mixing_{K}(\alldens)$ is called a \emph{(finite) mixing measure}, and can be thought of as a convenient mathematical device for encoding the weights $\{\wgt_{k}\}$ and the densities $\{\denscmp_{k}\}$ into a single statistical parameter.
By integrating against this measure, we obtain a new probability density which is denoted by
\begin{align}
m(\mix)
:= \sum_{b=1}^{K}\wgt_{b}\denscmp_{b}(x),
\end{align}
\noindent
where $\denscmp_{b}$ is a particular enumeration of the densities in the support of $\mix$ and $\wgt_{b}$ is the probability of the $b$th density. Thus, \eqref{eq:ident:mix} can be written as $\truedens=m(\mix)$. By metrizing $\alldens$ via the total variation distance $\dTV(f,g)=\tfrac12\int|f-g|\,dx$, the distance between two finite $K$-mixtures can be computed via the Wasserstein metric:
\begin{align*}
\begin{aligned}
\wass(\mix, \mix')
= \inf\Bigg\{
\sum_{i,j}\coupling_{ij}\dTV(\denscmp_{i},\denscmp_{j}')
: 0\le \coupling_{ij}\le 1,\,
\sum_{i,j}\coupling_{ij}=1,\,
\sum_{i}\coupling_{ij}=\wgt_{j}',\,
\sum_{j}\coupling_{ij}=\wgt_{i}
\Bigg\}.
\end{aligned}
\end{align*}

\paragraph{Decision regions, assignments, and classifiers} Any mixing measure $\mix$ defines $K$ decision regions given by $\dec_{b}=\dec_{b}(\mix):=\{x\in\baseX : \wgt_{b}\denscmp_{b}(x) > \wgt_{j}\denscmp_{j}(x)\,\forall j\ne b\}$ (Figure~\ref{fig:mainidea}). This allows us to assign an index from $1,\ldots,K$ to any $x\in\baseX$, and hence defines a classifier $\uclf_{\mix}:\baseX\to[K]:=\{1,\ldots,K\}$. This classifier does not solve the original labeled problem, however, since the output is an uninformative index $b\in[K]$ as opposed to a proper class label $\class_{k}\in\baseY$. The key point is that even if we know $\mix$, we still must identify each label $\class_{k}$ with a decision region $\dec_{b}(\mix)$, i.e. we must learn a permutation $\perm:\baseY\to[K]$. With some abuse of notation, we will sometimes write $\perm(k)$ instead of $\perm(\class_{k})$ for any permutation $\perm$. Together a pair $(\mix,\perm)$ defines a classifier $\clf_{\mix,\perm}:\baseX\to\baseY$ by
\begin{align}
\label{eq:def:classifier}
\clf_{\mix,\perm}(x) 
= \perm(\uclf_{\mix}(x))
= \sum_{b=1}^{K}\perm^{-1}(b)1(x\in\dec_{b}(\mix)).
\end{align}

\noindent
This mixing measure perspective helps to clarify the role of the unknown permutation in supervised learning: The unlabeled data is enough to learn $\mix$ (and hence the decision regions $\dec_{b}(\mix)$), however, labeled data are necessary to learn an assignment $\perm$ between classes and decision regions.

This formulates SSL as a coupled mixture modeling and permutation learning problem: Given unlabeled and labeled data, learn a pair $(\estmix,\estperm)$ which yields a classifier $\estclf=\clf_{\estmix,\estperm}$. The target is the \emph{Bayes classifier}, which can also be written in the form \eqref{eq:def:classifier}: Let $\truemix$ denote the mixing measure that assigns probability $\truewgt_{k}$ to the density $\truedenscmp_{k}$ and note that $\truedens=m(\truemix)$, which is the \emph{true mixture model} defined previously. Let $\trueperm:\baseY\to[K]$ be the permutation that assigns each class $\class_{k}$ to the correct decision region $\dec_{b}^{*}=\dec_{b}(\truemix)$ (Figure~\ref{fig:mainidea}). Then it is easy to check that $\clf_{\truemix,\trueperm}$ is the Bayes classifier. 

\paragraph{Identifiability} Although the true mixing measure $\truemix$ may not be identifiable from $\truedens$, some other mixture model may be. In other words, although it may not be possible to learn $\truemix$ from unlabeled data, it may be possible to learn some other mixing measure $\mix\ne\truemix$ such that $m(\mix)=\truedens=m(\truemix)$ (Figure~\ref{fig:mainidea:current}). This essentially amounts to a violation of the cluster assumption: High-density clusters are identifiable, but in practice the true class labels may not respect the cluster boundaries. Assumptions that guarantee a mixture model are identifiable are well-studied \citep{teicher1961,teicher1963,yakowitz1968}, including both parametric \cite{barndorff1965} and nonparametric \citep{aragam2018npmix,teicher1967,hall2003} assumptions. In particular, \citet{aragam2018npmix} have proved general conditions under which mixture models with arbitrary, overlapping nonparametric components are identifiable and estimable, including examples where each component $\denscmp_{k}$ has the same mean. Since this problem is well-studied, we focus hereafter on the problem of learning the permutation $\trueperm$. Thus, in the sequel we will assume that we are given an arbitrary mixing measure $\mix$ which will be used to estimate $\trueperm$. We do not assume that $\mix=\truemix$ or even that these mixing measures are close. The idea is to elicit conditions on $\mix$ that ensure consistent estimation of $\trueperm$.

\section{Two estimators}
\label{sec:est}

Assume we are given a mixing measure $\mix$ along with the labeled samples $(\Xi,\Yi)\in\baseX\times\baseY$. 
Two natural estimators of $\trueperm$ are the MLE and majority vote. Although both estimators depend on $\mix$, this dependence will be suppressed for brevity.

\paragraph{Maximum likelihood} Define $\nll(\perm;\mix,X,Y):=\log\wgt_{\perm(Y)}\denscmp_{\perm(Y)}(X)$. We will work with the following \emph{misspecified} MLE (i.e. $\mix\ne\truemix$)
\begin{align}
\label{eq:defn:mle}
\estmle
\in\argmax_{\perm}\nll_{n}(\perm;\,\mix),
\quad
\nll_{n}(\perm;\,\mix)
:=\frac1n\sum_{i=1}^{n}\nll(\perm;\mix,\Xi,\Yi).
\end{align}

\noindent
When $\mix=\truemix$, this is the correctly specified MLE of the unknown permutation $\trueperm$, however, the definition above allows for the general misspecified case $\mix\ne\truemix$.

\paragraph{Majority vote} The majority vote estimator (MV) is given by a simple majority vote over each decision region. Formally, we define a permutation $\estmv$ as follows: The inverse assignment $\estmv^{-1}:[K]\to\baseY$ is defined by
\begin{align}
\label{eq:defn:vote}
\estmv^{-1}(b)
= \argmax_{\class\in\baseY} \sum_{i=1}^{n}\ind(\Yi=\class, \Xi\in\dec_{b}(\mix)).
\end{align}

\noindent
If there is no majority class in a given decision region, we consider this a failure of MV and treat it as undefined. 
Note that when $K=2$, the MV classifier defined by \eqref{eq:def:classifier} with $\perm=\estmv$ is essentially the same as the three-step procedure described in \citet{rigollet2007}, which focuses on bounding the excess risk under the cluster assumption. In contrast, we are interested in the consistency of the unknown permutation $\trueperm$ when $K>2$, which is a more difficult problem.

\section{Statistical results}
\label{sec:theory}

Our main results establish rates of convergence for both the MLE and MV introduced in the previous section. We will use the notation $\E_{*} h(X,Y)$ to denote the expectation with respect to the true distribution $(X,Y)\sim\pr(X,Y)$. Without loss of generality, we assume that $\trueperm(\class_{k})=k$ and $\denscmp_{b}=\truedenscmp_{b}+h_{b}$ for some $h_{b}$. Then $\estperm=\trueperm$ if and only if $\estperm(\alpha_{k})=k$, which helps to simplify the notation in the sequel.

\subsection{Maximum likelihood}
\label{sec:theory:mle}

Given $\mix$, the notation $\E_{*}\nll(\perm;\mix,X,Y)=\E_{*}\log\wgt_{\perm(Y)}\denscmp_{\perm(Y)}(X)$ denotes the expectation of the \emph{misspecified} log-likelihood with respect to the \emph{true} distribution. 
Define the ``gap''
\begin{align}
\gapmle(\mix) 
:= \E_{*}\nll(\trueperm;\mix,X,Y) - \max_{\perm\ne\trueperm}\E_{*}\nll(\perm;\mix,X,Y).
\end{align}
\noindent
For any function $a:\R\to\R$, define the usual Fenchel-Legendre dual $a^{*}(t)=\sup_{s\in\R}(s t - a(s))$. Let $U_{b}=\log\wgt_{b}\denscmp_{b}(X)$ and $\beta_{b}(s)=\log\E_{*}\exp(s U_{b})$. 
Finally, let $n_{k}:=|\{i:\Yi=\class_{k}\}|$ denote the number of labeled samples with the $k$th label.

\begin{thm}
\label{thm:main:mle}
Let $\estmle$ be the MLE defined in \eqref{eq:defn:mle}.
If $\gapmle:=\gapmle(\mix)>0$ then 
\begin{align*}
\pr(\estmle=\trueperm) 
& \geq 1 - 2 K^2\exp\Big(-\inf_k n_k \cdot \inf_b \beta_{b}^* (\gapmle/3)\Big).
\end{align*}
\end{thm}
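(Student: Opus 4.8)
The plan is to reduce the event $\{\estmle \neq \trueperm\}$ to a union of "bad" events, one for each competing permutation $\perm \neq \trueperm$, on which the empirical log-likelihood of $\perm$ beats that of $\trueperm$; then control each bad event by a concentration argument. Concretely, $\estmle \neq \trueperm$ implies there exists $\perm \neq \trueperm$ with $\nll_n(\perm;\mix) \geq \nll_n(\trueperm;\mix)$, i.e. $\nll_n(\trueperm;\mix) - \nll_n(\perm;\mix) \leq 0$. Writing $\Delta_n(\perm) := \nll_n(\trueperm;\mix) - \nll_n(\perm;\mix)$ and $\Delta_*(\perm) := \E_*\nll(\trueperm;\mix,X,Y) - \E_*\nll(\perm;\mix,X,Y) \geq \gapmle > 0$, this is the event that $\Delta_n(\perm)$ deviates below its mean by at least $\gapmle$. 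So by a union bound, $\pr(\estmle \neq \trueperm) \leq \sum_{\perm \neq \trueperm} \pr(\Delta_n(\perm) - \Delta_*(\perm) \leq -\gapmle)$, and there are fewer than $K!$ such permutations — but we want only a $K^2$ prefactor, so the union bound must be taken more cleverly.

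The key structural observation is that $\nll_n(\perm;\mix) = \frac1n\sum_i \log\wgt_{\perm(\Yi)}\denscmp_{\perm(\Yi)}(\Xi) = \frac1n\sum_k \sum_{i:\Yi=\class_k} U_{\perm(k)}^{(i)}$ where $U_b^{(i)} := \log\wgt_b\denscmp_b(\Xi)$ has the law of $U_b$ conditioned on $Y=\class_k$ (note: the relevant expectation is $\E_*[U_b \mid Y=\class_k]$, and the hypothesis $\beta_b(s) = \log\E_*\exp(sU_b)$ should be read with the appropriate conditioning, or else the $U_b$ are defined under the class-$k$ conditional — I would track this carefully). Thus $\Delta_n(\perm)$ decomposes over classes, and since $\trueperm(k)=k$ by the WLOG normalization, $\Delta_n(\perm) = \frac1n\sum_k \sum_{i:\Yi=\class_k}(U_k^{(i)} - U_{\perm(k)}^{(i)})$. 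The trick is: rather than union-bounding over all $\perm$, I would union-bound over ordered pairs $(k,b)$ with $b\neq k$, bounding the probability that the "per-class-$k$ advantage of label $k$ over label $b$" is too small. There are $K(K-1) < K^2$ such pairs. On the complement of all these events, for every $\perm\neq\trueperm$ the total advantage $\Delta_n(\perm)$ stays within $\gapmle/3$ (say) of $\Delta_*(\perm)$, hence remains positive. Making this "decoupling of the permutation structure into pairwise comparisons" rigorous — ensuring the $\gapmle/3$ slack genuinely propagates from pairwise control to the full permutation, via $\Delta_*(\perm) = \sum_{k}(\E_*[U_k - U_{\perm(k)}\mid Y=\class_k])\pr(Y=\class_k)$ type identities and a careful accounting — is where the factor of $3$ in the theorem comes from and is the main bookkeeping obstacle.

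For the concentration itself, fix a pair $(k,b)$. Conditioned on $\{i:\Yi=\class_k\}$ having $n_k$ elements, I would apply a Cram\'er--Chernoff bound to the i.i.d.\ sum $\frac{1}{n_k}\sum_{i:\Yi=\class_k} U_b^{(i)}$: for any such sum of i.i.d.\ variables with log-MGF $\beta_b$, one has $\pr(\frac1{n_k}\sum U_b^{(i)} \geq \E U_b + t) \leq \exp(-n_k \beta_b^*(t))$ and the matching lower-tail bound, where $\beta_b^*$ is the Fenchel--Legendre dual as defined. Applying this to both $U_k$ (lower tail) and $U_b$ (upper tail) at a threshold of order $\gapmle/3$, and using $n_k \geq \inf_k n_k$ together with $\beta_b^*(\gapmle/3) \geq \inf_b \beta_b^*(\gapmle/3)$ (exploiting monotonicity/convexity of $\beta_b^*$ so that evaluating at $\gapmle/3$ dominates the contributions), each pairwise bad event has probability at most $2\exp(-\inf_k n_k \cdot \inf_b\beta_b^*(\gapmle/3))$. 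Summing over the $<K^2$ pairs gives the stated $2K^2\exp(\cdots)$ bound. The hard part will be the second step — correctly translating the single "global gap" $\gapmle$ into per-pair gaps of size $\gapmle/3$ that survive the union bound, since a naive split could degrade the exponent by a $K$-dependent factor; the claimed bound requires that the pairwise thresholds depend only on $\gapmle$ (up to the constant $3$) and not on $K$, which must be extracted from the combinatorial structure of permutations (e.g.\ every $\perm\neq\trueperm$ moves at least one class, and one can find a transposition-like witness).
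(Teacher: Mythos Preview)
Your proposal is correct and hits the same structural ideas as the paper: decompose $\nll_n(\perm;\mix)$ by class, reduce the union over permutations to a union over $K^2$ pairs $(k,b)$, and apply a Cram\'er--Chernoff bound per pair. The one substantive difference is in how you handle what you call the ``hard part.''

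You frame the per-pair control in terms of \emph{advantages} $U_k^{(i)}-U_b^{(i)}$ and then worry about translating the single global gap $\gapmle$ into per-pair gaps without picking up a $K$-dependent loss. The paper sidesteps this entirely. Its auxiliary concentration lemma controls, for all $K^2$ pairs $(b,k)$, the centered class-$k$ average $Z_{b,k}:=\frac{1}{n_k}\sum_{i:\Yi=\class_k}U_b^{(i)}-\E_*[\,\cdot\,]$ (not the difference $U_k-U_b$). The key observation is that for \emph{every} permutation $\perm$, the empirical log-likelihood $\nll_n(\perm;\mix)$ is a convex combination of the $Z_{b,\perm^{-1}(b)}$ with weights $n_{\perm^{-1}(b)}/n$ summing to one. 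Hence if $|Z_{b,k}|<t$ for all $(b,k)$, then $|\nll_n(\perm;\mix)-\E_*\nll(\perm;\mix)|<t$ \emph{uniformly over all} $\perm$. With $t=\gapmle/3$, the gap argument is then carried out directly at the permutation level: $\nll_n(\trueperm)>\E_*\nll(\trueperm)-t>\E_*\nll(\perm)+\gapmle-2t>\nll_n(\perm)$ for every $\perm\ne\trueperm$. No per-pair gap, no transposition witness, no combinatorial extraction is needed; the convex-combination structure does all the work. Your anticipated bookkeeping obstacle simply does not arise in this framing, so your approach is slightly more laborious than necessary but would also go through.
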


The condition $\gapmle(\mix)>0$ is a crucial condition that ensures that $\trueperm$ is learnable from $\mix$, and the size of $\gapmle(\mix)$ quantifies ``how easy'' it is to learn $\trueperm$ is given $\mix$. A bigger gap implies an easier problem. Thus, it is of interest to understand this quantity better. The following proposition shows that when $\mix=\truemix$, this gap is always nonnegative:

\begin{prop}
    \label{prop:gap:mle}
    For any permutation $\perm$ and any $\mix$, 
    \begin{align*}
    \E_{*}\nll(\perm;\mix,X,Y) \leq \E_{*}\nll(\trueperm;\truemix,X,Y)
    \end{align*}

    \noindent
    and hence $\gapmle(\truemix)\ge 0$.
\end{prop}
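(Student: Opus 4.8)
The plan is to reduce the inequality to the nonnegativity of a Kullback–Leibler divergence. Fix any permutation $\perm$ and any mixing measure $\mix$ with weights $\wgt_b$ and densities $\denscmp_b$. The quantity $\E_{*}\nll(\perm;\mix,X,Y) = \E_{*}\log\wgt_{\perm(Y)}\denscmp_{\perm(Y)}(X)$ is an expectation under the true joint law of $(X,Y)$. Since we have normalized so that $\trueperm(\class_k)=k$, the term $\E_{*}\nll(\trueperm;\truemix,X,Y) = \E_{*}\log\truewgt_{Y}\truedenscmp_{Y}(X)$, where on the event $Y=\class_k$ the integrand is $\log\truewgt_{k}\truedenscmp_{k}(x)$. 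The natural step is to condition on $Y$: write both sides as $\sum_{k}\truewgt_{k}\,\E_{*}[\,\cdot\mid Y=\class_k\,]$, where $\E_{*}[\,\cdot\mid Y=\class_k\,]$ integrates $x$ against the true class-conditional density $\truedenscmp_{k}$.

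Concretely, I would show the stronger pointwise-in-$k$ claim
\begin{align*}
\int \truedenscmp_{k}(x)\,\log\big(\wgt_{\perm(\class_k)}\denscmp_{\perm(\class_k)}(x)\big)\,dx
\;\le\;
\int \truedenscmp_{k}(x)\,\log\big(\truewgt_{k}\truedenscmp_{k}(x)\big)\,dx
\end{align*}
for every $k$, and then take the $\truewgt_{k}$-weighted sum. Subtracting the left side from the right, the difference is $\int \truedenscmp_{k}\log\frac{\truewgt_{k}\truedenscmp_{k}}{\wgt_{\perm(\class_k)}\denscmp_{\perm(\class_k)}}\,dx$. This is not literally a KL divergence between probability densities because $\truewgt_{k}\truedenscmp_{k}$ and $\wgt_{\perm(\class_k)}\denscmp_{\perm(\class_k)}$ need not integrate to the same constant; but writing $u = \truewgt_{k}\truedenscmp_{k}$ (total mass $\truewgt_{k}$) and $v = \wgt_{\perm(\class_k)}\denscmp_{\perm(\class_k)}$ (total mass $\wgt_{\perm(\class_k)}$), the difference equals $\truewgt_{k}\,\KL(\bar u\,\Vert\,\bar v) + \truewgt_{k}\log\frac{\truewgt_{k}}{\wgt_{\perm(\class_k)}}$ where $\bar u,\bar v$ are the normalized densities. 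The cross term is handled by summing over $k$: since $\perm$ is a bijection, $\sum_k \truewgt_{k}\log\frac{\truewgt_{k}}{\wgt_{\perm(\class_k)}}$ is itself a KL divergence between the discrete distributions $(\truewgt_k)_k$ and $(\wgt_{\perm(\class_k)})_k$, hence nonnegative by Gibbs' inequality, and $\sum_k \truewgt_k\KL(\bar u\,\Vert\,\bar v)\ge 0$ termwise. Adding these gives $\E_{*}\nll(\trueperm;\truemix,X,Y) - \E_{*}\nll(\perm;\mix,X,Y)\ge 0$, which is the claimed inequality. Taking $\mix=\truemix$ and the max over $\perm$ on the left then yields $\gapmle(\truemix)\ge 0$ directly from the definition of $\gapmle$.

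The main obstacle is purely bookkeeping around the unnormalized densities: one must be careful that $\wgt_{\perm(\class_k)}\denscmp_{\perm(\class_k)}$ is a subprobability density (mass $\wgt_{\perm(\class_k)}\in(0,1]$), split the log-ratio cleanly into a normalized-KL piece plus a log-mass piece, and only then use bijectivity of $\perm$ to recombine the log-mass pieces into a discrete KL. A secondary technical point is integrability: one should assume (or note it is implicit in the finiteness of $\E_{*}\nll$) that the relevant integrals are well-defined, e.g. $\truedenscmp_k$ is absolutely continuous with respect to $\denscmp_{\perm(\class_k)}$ on its support, so that the KL terms are not $+\infty$ in a way that breaks the argument; in fact $+\infty$ only helps the inequality, so the only real care is to avoid $-\infty$ on the right, which holds whenever $\E_{*}\nll(\trueperm;\truemix,X,Y)$ is finite. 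Modulo these standard measure-theoretic caveats, the proof is a two-line application of Gibbs' inequality after conditioning on $Y$.
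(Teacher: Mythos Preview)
Your proposal is correct and rests on the same idea as the paper's proof---nonnegativity of the Kullback--Leibler divergence---but you take a slightly longer path. The paper observes directly that $p(x,y):=\truewgt_{\trueperm(y)}\truedenscmp_{\trueperm(y)}(x)$ is the true joint density and that $q(x,y):=\wgt_{\perm(y)}\denscmp_{\perm(y)}(x)$ is itself a bona fide joint density on $\baseX\times\baseY$ (because $\perm$ is a bijection, $\sum_{y}\int q(x,y)\,dx=\sum_{b}\wgt_{b}=1$), so the entire difference is simply $\KL(p\,\Vert\,q)\ge 0$ in one line. Your conditioning on $Y$ followed by splitting each term into a normalized-KL piece and a log-mass piece, then recombining the log-mass pieces into a discrete KL over labels, is exactly the chain rule $\KL(p\,\Vert\,q)=\KL(p(y)\,\Vert\,q(y))+\sum_{k}\truewgt_{k}\KL(\truedenscmp_{k}\,\Vert\,\denscmp_{\perm(\class_{k})})$ applied to the same joint KL. The paper's route avoids the bookkeeping you flagged as the ``main obstacle''; your route, on the other hand, yields the finer decomposition into label-KL plus conditional-KL, which could be useful if one later wanted to quantify the gap rather than just sign it. (Minor slip: your displayed expression $\truewgt_{k}\,\KL(\bar u\,\Vert\,\bar v)+\truewgt_{k}\log\tfrac{\truewgt_{k}}{\wgt_{\perm(\class_{k})}}$ already includes the $\truewgt_{k}$ weighting, whereas the preceding integral $\int\truedenscmp_{k}\log(\cdot)\,dx$ does not; this is cosmetic and does not affect the argument.)
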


\noindent
In general, assuming $\gapmle(\mix)>0$ is a weak assumption, but bounds on $\gapmle(\mix)$ are difficult to obtain without making additional assumptions on the densities $\denscmp_{k}$ and $\truedenscmp_{k}$. A brief discussion of this can be found in Appendix~\ref{app:conditions}; we leave it to future work to study this quantity more carefully.

\subsection{Majority vote}
\label{sec:theory:vote}

For any $\mix$, define $m_b := |i:\Xi\in\dec_{b}(\mix)|$ and $\chi_{bj}(\mix):=\frac{1}{m_b}\sum_{i=1}^{n}\ind(\Yi=j, \Xi\in\dec_{b}(\mix))$, where $\ind(\cdot)$ is the indicator function. Similar to the MLE, our results for MV depend crucially on a ``gap'' quantity, given by
\begin{align}
\label{eq:gap:vote}
\gapmv(\mix)
:= \inf_{b}\Big\{\E_{*}\chi_{bb}(\mix) - \max_{j\ne b}\E_{*}\chi_{bj}(\mix)\Big\}.
\end{align}
\noindent
This quantity essentially measures how much more likely it is to sample the $b$th label in the $b$th decision region than any other label, averaged over the entire region. Thus, conditions on $\gapmv(\mix)$ are closely related to the well-known cluster assumption \citep{seeger2000,singh2009,rigollet2007,azizyan2013}.

\begin{thm}
\label{thm:main:vote}
Let $\estmv$ be the MV defined in \eqref{eq:defn:vote}.
If $\gapmv:=\gapmv(\mix)>0$ then 
\begin{align*}
\pr(\estmv=\trueperm) 
& \geq 1 - 2 K^2 \exp\Big( \frac{-2\gapmv^2\min_b{m_b}}{9}\Big).
\end{align*}
\end{thm}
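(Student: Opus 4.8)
The plan is to reduce the event $\{\estmv=\trueperm\}$ to a collection of pairwise comparisons between empirical region-conditional label frequencies, and then control each comparison with Hoeffding's inequality applied to the bounded random variables $\ind(\Yi=j,\Xi\in\dec_b(\mix))$. Recall that under our normalization $\trueperm(\class_k)=k$, so $\estmv=\trueperm$ is equivalent to $\estmv^{-1}(b)=\class_b$ for every $b\in[K]$, i.e.\ the true label $b$ wins the majority vote in decision region $\dec_b(\mix)$. By the definition \eqref{eq:defn:vote}, this holds precisely when $\chi_{bb}(\mix) > \chi_{bj}(\mix)$ for all $b$ and all $j\ne b$. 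Hence by a union bound,
\begin{align*}
\pr(\estmv\ne\trueperm)
\le \sum_{b=1}^{K}\sum_{j\ne b}\pr\big(\chi_{bj}(\mix)\ge \chi_{bb}(\mix)\big).
\end{align*}
There are at most $K(K-1) < K^2$ such pairs, which produces the $K^2$ prefactor; the factor $2$ will come from splitting a deviation of $\chi_{bb}-\chi_{bj}$ around its mean into two one-sided deviations, one for $\chi_{bb}$ below its mean and one for $\chi_{bj}$ above.

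First I would fix a region $b$ and a label $j\ne b$ and condition on the value $m_b=|\{i:\Xi\in\dec_b(\mix)\}|$; conditionally, the samples falling in $\dec_b(\mix)$ are i.i.d., and $\chi_{bb}(\mix)$ and $\chi_{bj}(\mix)$ are averages of $m_b$ i.i.d.\ $\{0,1\}$-valued random variables with conditional means that I will relate to $\E_*\chi_{bb}(\mix)$ and $\E_*\chi_{bj}(\mix)$. The event $\chi_{bj}\ge\chi_{bb}$ forces either $\chi_{bb}\le \E_*\chi_{bb} - \tfrac12(\E_*\chi_{bb}-\E_*\chi_{bj})$ or $\chi_{bj}\ge \E_*\chi_{bj} + \tfrac12(\E_*\chi_{bb}-\E_*\chi_{bj})$, and by the definition \eqref{eq:gap:vote} of $\gapmv$ we have $\E_*\chi_{bb}-\E_*\chi_{bj}\ge\gapmv$, so the deviation in each case is at least $\gapmv/2$. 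Applying Hoeffding's inequality to each of the two averages of $m_b$ bounded terms gives each one-sided probability at most $\exp(-2(\gapmv/2)^2 m_b) = \exp(-\gapmv^2 m_b/2)$. Summing the two cases, bounding $m_b\ge\min_b m_b$, and taking the union over the at most $K^2$ pairs yields
\begin{align*}
\pr(\estmv\ne\trueperm)
\le 2K^2\exp\!\Big(-\tfrac{\gapmv^2\min_b m_b}{2}\Big),
\end{align*}
which is in fact slightly stronger than the stated bound with the constant $\tfrac19$ rather than $\tfrac12$; the weaker constant in the theorem presumably leaves room for a cleaner handling of the conditioning on $\{m_b\}_b$ or a crude simultaneous treatment of all regions.

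The main obstacle is the dependence introduced by conditioning on the random region counts $m_b$: the counts $(m_1,\dots,m_K)$ are jointly multinomial, the $\chi_{bj}$ are ratios of correlated sums, and $\E_*\chi_{bj}(\mix)$ is itself defined as an expectation of such a ratio, so one must be careful that the conditional law of the in-region labels indeed has the right conditional means and that $\min_b m_b$ appearing in the exponent is treated as a random quantity on a favorable event (or the whole argument is run conditionally on $\{m_b\}_b$ and the bound stated conditionally, which is how the theorem reads). Concretely, I would argue conditionally on $(m_1,\dots,m_K)$: given these counts, the labels of the points in region $b$ are i.i.d.\ draws from the region-$b$-conditional label distribution $\pr(Y=\cdot\mid X\in\dec_b(\mix))$, whose mean at label $j$ is exactly $\E_*[\chi_{bj}(\mix)\mid X\in\dec_b(\mix)]$ — matching the definition up to the harmless identification of $\E_*\chi_{bj}$ with this region-conditional mean — and then Hoeffding applies cleanly region by region. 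A mild wrinkle is the degenerate case $m_b=0$ (empty region), where $\chi_{bj}$ is undefined and MV fails by convention; this is consistent with the bound, since on $\{m_b=0\}$ the claimed inequality holds trivially once $\min_b m_b = 0$ makes the right-hand side equal to $1-2K^2\le$ anything, i.e.\ vacuous, so no separate treatment is needed.
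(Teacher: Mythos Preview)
Your proof is correct and follows essentially the same route as the paper: reduce $\{\estmv=\trueperm\}$ to the pairwise events $\{\chi_{bb}>\chi_{bj}\}$ for all $b$ and $j\ne b$, then control each via Hoeffding and take a union bound. The only substantive difference is the threshold. The paper places a two-sided Hoeffding bound on each of the $K$ variables $\chi_{b1},\ldots,\chi_{bK}$ with $t=\gapmv/3$ (so that $\gapmv-2t>0$), which is where the constant $2/9=2(\tfrac13)^2$ comes from; your midpoint split $t=\gapmv/2$ into two one-sided tails is tighter and yields the $1/2$ you obtained, so your observation about the constant is right. Your handling of the conditioning on $(m_1,\ldots,m_K)$ is also more explicit than the paper's own proof, which treats the region counts $m_b$ as fixed without comment; reading the theorem as a conditional statement (with $\min_b m_b$ data-dependent in the bound) is the correct interpretation.
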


\noindent
As with the MLE, the gap $\gapmv(\mix)$ is a crucial quantity. Fortunately, when $\mix=\truemix$ it is always positive:

\begin{prop}
    \label{prop:gap:vote}
    For each $b=1,\ldots,K$, 
    \begin{align*}
    \E_{*}\chi_{bb}(\truemix) > \max_{j\ne b}\E_{*}\chi_{bj}(\truemix)
    \end{align*}
    \noindent
    and hence $\gapmv(\truemix)> 0$.
\end{prop}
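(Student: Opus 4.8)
The plan is to unwind the definitions and reduce the claim to an elementary statement about conditional probabilities inside each decision region $\dec_b^* = \dec_b(\truemix)$. Recall that $\chi_{bj}(\mix) = \frac{1}{m_b}\sum_{i=1}^n \ind(\Yi = j,\, \Xi \in \dec_b(\mix))$, so that $\E_*\chi_{bj}(\truemix)$ — after conditioning on $m_b$ and using that the labeled samples are i.i.d. from $\pr(X,Y)$ — should work out to $\pr(Y = \class_j \mid X \in \dec_b^*)$, i.e. the probability that a point lying in the $b$th true decision region actually carries the $j$th label. (One must handle the normalization by the random count $m_b$; the clean way is to observe that conditionally on the event $X \in \dec_b^*$ the label $Y$ has distribution $\pr(Y = \cdot \mid X \in \dec_b^*)$, so each term in the sum contributes this conditional probability and the $1/m_b$ cancels. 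A harmless nondegeneracy assumption $\pr(X \in \dec_b^*) > 0$ is implicit.) Thus the proposition is equivalent to
\[
\pr(Y = \class_b \mid X \in \dec_b^*) > \max_{j \ne b} \pr(Y = \class_j \mid X \in \dec_b^*)\quad\text{for each }b.
\]

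Next I would unpack the definition of $\dec_b^* = \{x : \truewgt_b \truedenscmp_b(x) > \truewgt_j \truedenscmp_j(x)\ \forall j \ne b\}$. Writing the conditional label probabilities via Bayes' rule, for $x \in \baseX$ we have $\pr(Y = \class_k \mid X = x) = \truewgt_k \truedenscmp_k(x) / \truedens(x)$, which is exactly the posterior weight. On $\dec_b^*$ the quantity $\truewgt_b \truedenscmp_b(x)$ strictly dominates every $\truewgt_j \truedenscmp_j(x)$, so $\pr(Y = \class_b \mid X = x) > \pr(Y = \class_j \mid X = x)$ pointwise on $\dec_b^*$. Integrating this strict pointwise inequality over $\dec_b^*$ against the (normalized) restriction of $\truedens$ gives $\pr(Y = \class_b \mid X \in \dec_b^*) > \pr(Y = \class_j \mid X \in \dec_b^*)$ for each fixed $j \ne b$ — provided $\dec_b^*$ has positive probability, which I would note holds for each $b$ since otherwise that region is empty/null and the true permutation is ill-defined there (this matches the standing setup where $\trueperm$ assigns each $\class_k$ to a genuine region $\dec_b^*$). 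Taking the max over $j \ne b$ and then the infimum over $b$ yields $\gapmv(\truemix) > 0$, since it is an infimum of finitely many strictly positive numbers.

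The one subtlety — which I would treat carefully rather than the integration, which is routine — is that the strict inequality inside the integral could in principle be an equality after integrating if the integrand vanished almost everywhere; this is ruled out precisely because on $\dec_b^*$ the gap $\truewgt_b\truedenscmp_b(x) - \truewgt_j\truedenscmp_j(x)$ is strictly positive on a set of positive $\truedens$-measure (again using $\pr(X \in \dec_b^*) > 0$). A secondary bookkeeping point is the treatment of the random denominator $m_b$ in $\chi_{bj}$; the cleanest framing, as above, is to note that $\E_*\chi_{bj}(\truemix)$ is literally the conditional probability $\pr(Y=\class_j \mid X \in \dec_b^*)$ by symmetry of the i.i.d. sample, so no delicate ratio-of-expectations argument is needed. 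I expect the whole argument to be short: it is essentially the observation that the Bayes classifier's decision regions are, by construction, the regions where each class is the modal label, together with the fact that "modal pointwise" upgrades to "modal on average over the region" because the defining inequalities are strict.
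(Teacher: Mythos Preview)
Your proposal is correct and follows essentially the same route as the paper: both reduce to the inequality $\pr(Y=\class_b\mid X\in\dec_b^*)>\pr(Y=\class_j\mid X\in\dec_b^*)$ for $j\ne b$, which follows from the pointwise dominance $\truewgt_b\truedenscmp_b(x)>\truewgt_j\truedenscmp_j(x)$ defining $\dec_b^*$. If anything you are more careful than the paper, which simply asserts the conditional-probability inequality and silently treats the random denominator $m_b$ as a constant when taking expectations; your handling of both points (via Bayes' rule and the i.i.d.\ symmetry argument) fills in steps the paper leaves implicit.
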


\noindent
When $\mix\ne\truemix$, $\gapmv(\mix)$ has the following interpretation: $\gapmv(\mix)$ measures how well the decision regions defined by $\mix$ match up with the decision regions defined by $\truemix$. When $\mix$ defines decision regions that assign high probability to one class, $\gapmv(\mix)$ will be large. If $\mix$ defines decision regions where multiple classes have approximately the same probability, however, then it is possible that $\gapmv(\mix)$ will be small. In this case, our experiments in Section~\ref{sec:exp} indicate that the MLE performs much better by managing overlapping decision regions more gracefully.

\subsection{Sample complexity}
\label{sec:theory:samples}

Theorems~\ref{thm:main:mle} and~\ref{thm:main:vote} imply upper bounds on the minimum number of samples required to learn the permutation $\trueperm$: For any $\delta\in(0,1)$, as long as
\begin{align}
\label{eq:samp:mle}
\textup{(MLE)}
\qquad
\inf_{k} n_k 
:= n_{0} 
&\geq \frac{\log\frac{2K^2}{\delta}}{\inf_b \beta_{b}^* (\gapmle/3)} \\
\label{eq:samp:vote}
\textup{(MV)}\,
\qquad
\inf_{b} m_b
:= m_{0}
&\geq \frac{9\log \frac{2K^2}{\delta}}{2\gapmv^{2}} 
\end{align}

\noindent
we recover $\trueperm$ with probability at least $1-\delta$.

To derive the sample complexity in terms of the total number of labeled samples $n$, it suffices to determine the minimum number of samples per class given $n$ draws from a multinomial random variable. For the general case with unequal probabilities, Lemma~\ref{lem:multinomial:min} provides a precise answer. For simplicity here, we summarize the special case where each class (resp. decision region) is equally probable for the MLE (resp. MV).

\begin{cor}[Sample complexity of MLE]
\label{cor:sample:mle}
Suppose that $\truewgt_{k}=1/K$ for each $k$, $\gapmle>0$, and
\begin{align*}
n
\ge K\log(K/\delta)\Big[1 + \frac{4}{\inf_b \beta_{b}^* (\gapmle/3)} \Big].
\end{align*}

\noindent 
Then $\pr(\estmle=\trueperm)\ge 1-\delta$.
\end{cor}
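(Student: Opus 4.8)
The plan is to combine Theorem~\ref{thm:main:mle} with a concentration bound on the per-class counts $n_k$. Starting from \eqref{eq:samp:mle}, it suffices to show that if $n \ge K\log(K/\delta)[1 + 4/\inf_b\beta_b^*(\gapmle/3)]$, then with probability at least $1 - \delta/2$ we have $\inf_k n_k \ge \log(2K^2/\delta)/\inf_b\beta_b^*(\gapmle/3)$; a union bound over the two failure events (too few samples in some class, and the MLE failing given enough samples) then gives the claim. Under $\truewgt_k = 1/K$, the vector $(n_1,\dots,n_K)$ is multinomial with $n$ trials and uniform cell probabilities, so each $n_k \sim \BinomialDist(n, 1/K)$ with mean $n/K$.

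The key step is a lower-tail bound for a single $\BinomialDist(n,1/K)$ variable followed by a union bound over the $K$ classes. I would use a multiplicative Chernoff bound: $\pr(n_k \le (1-\eps) n/K) \le \exp(-\eps^2 n/(2K))$. I want the right-hand side to be at most $\delta/(2K)$, which holds once $\eps^2 n / (2K) \ge \log(2K/\delta)$, i.e. $\eps^2 n \ge 2K\log(2K/\delta)$. Then by a union bound, with probability at least $1-\delta/2$, every $n_k \ge (1-\eps)n/K$ simultaneously. It remains to choose $n$ (and implicitly $\eps$) so that $(1-\eps)n/K$ exceeds the required threshold $n_0 := \log(2K^2/\delta)/\inf_b\beta_b^*(\gapmle/3)$.

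The bookkeeping is to pick a convenient $\eps$ — e.g. $\eps = 1/2$ — so that the Chernoff condition becomes $n \ge 8K\log(2K/\delta)$ and the surviving count is $n_k \ge n/(2K)$. Requiring $n/(2K) \ge n_0$ gives $n \ge 2K\log(2K^2/\delta)/\inf_b\beta_b^*(\gapmle/3)$. Combining the two requirements on $n$ and crudely bounding $\log(2K/\delta), \log(2K^2/\delta) \le 2\log(K/\delta)$ (valid for $K \ge 2$, $\delta \le 1$, up to adjusting constants), both are implied by $n \ge K\log(K/\delta)[16 + 8/\inf_b\beta_b^*(\gapmle/3)]$, and tightening the constants (using a slightly larger $\eps$ closer to $1$ to shrink the $16K\log$ term, or absorbing it differently) yields the stated bound with the constant $4$ and the additive $1$. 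I would present the clean chain and note that the precise constants follow from the elementary inequalities above; this constant-chasing is the only real friction, and it is routine.

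The main obstacle — such as it is — is arranging the constants so that the two sufficient conditions on $n$ (the Chernoff requirement $n \gtrsim K\log(K/\delta)$ and the threshold requirement $n \gtrsim K\log(K/\delta)/\inf_b\beta_b^*(\gapmle/3)$) collapse into the single displayed expression $K\log(K/\delta)[1 + 4/\inf_b\beta_b^*(\gapmle/3)]$ rather than a messier sum; this is purely a matter of choosing $\eps$ and simplifying logarithms, not a conceptual difficulty. For the general (non-uniform) case one would instead invoke Lemma~\ref{lem:multinomial:min}, but under the uniform assumption the self-contained Chernoff argument above suffices.
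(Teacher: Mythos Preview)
Your proposal is correct and follows essentially the same route as the paper: a lower-tail concentration bound on $\min_k n_k$ (the paper packages this as Lemma~\ref{lem:multinomial:min} with $p_k=1/K$, you do the multiplicative Chernoff bound directly) followed by substitution into the threshold \eqref{eq:samp:mle} and an appeal to Theorem~\ref{thm:main:mle}. Your explicit $\delta/2$-splitting is in fact slightly more careful than the paper's version, which reuses $\delta$ for both events; both proofs treat the final constant-matching as routine.
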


\begin{cor}[Sample complexity of MV]
\label{cor:sample:vote}
Suppose that $\pr(X\in\dec_{b}(\mix))=1/K$ for each $k$, $\gapmv>0$, and
\begin{align*}
n
\ge K\log(K/\delta)\Big[1 + \frac{18}{\gapmv^{2}} \Big].
\end{align*}

\noindent 
Then $\pr(\estmv=\trueperm)\ge 1-\delta$.
\end{cor}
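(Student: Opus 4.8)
The plan is to combine Theorem~\ref{thm:main:vote} with a concentration bound on the number $m_b$ of labeled points that land in each decision region $\dec_b(\mix)$. The first observation is that, since $\pr(X\in\dec_b(\mix))=1/K$ for every $b$ and the labeled pairs are i.i.d., the count vector $(m_1,\dots,m_K)$ is multinomial with $n$ trials and uniform cell probabilities $1/K$. The second observation is that the bound of Theorem~\ref{thm:main:vote} is really a statement conditional on these counts: once we fix which labeled points fall in which region, the only remaining randomness in $\estmv$ lives in the labels attached to those points, and the region-by-region Hoeffding argument behind Theorem~\ref{thm:main:vote} in fact yields
\begin{align*}
\pr\big(\estmv=\trueperm \,\big|\, m_1,\dots,m_K\big)\ \ge\ 1-2K^2\exp\!\Big(-\tfrac{2}{9}\,\gapmv^2\,\min_b m_b\Big)
\end{align*}
almost surely. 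So it only remains to lower-bound $\min_b m_b$ in terms of $n$.

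For this I would invoke Lemma~\ref{lem:multinomial:min} with all cell probabilities equal to $1/K$ (the uniform special case of the general minimum-cell estimate that handles the non-uniform statement), obtaining an event $\mathcal E$ of probability at least $1-\delta/2$ on which $\min_b m_b\ge m_0$ for some $m_0=\Theta(n/K)$, valid once $n\gtrsim K\log(K/\delta)$. On $\mathcal E$ the conditional bound above is at most $2K^2\exp(-\tfrac{2}{9}\gapmv^2 m_0)$, which is $\le\delta/2$ as soon as $m_0\gtrsim \gapmv^{-2}\log(K/\delta)$; a union bound over $\mathcal E$ and $\mathcal E^{c}$ then gives $\pr(\estmv\ne\trueperm)\le\delta$. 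An essentially equivalent but slightly cleaner route avoids the high-probability split: take expectations in the conditional bound, use $\exp(-\min_b x_b)\le\sum_b\exp(-x_b)$ together with the exchangeability of the $m_b$ to reduce to a single binomial moment generating function $\E\exp(-\tfrac{2}{9}\gapmv^2 m_1)=\big(1-\tfrac1K\big(1-e^{-2\gapmv^2/9}\big)\big)^{n}$, and then use $1-e^{-x}\ge x/(1+x)$ so the exponent becomes $-\tfrac nK\cdot\tfrac{2\gapmv^2}{9+2\gapmv^2}$; this already has the additive form $\tfrac nK\gtrsim (1+\mathrm{const}\cdot\gapmv^{-2})\log(\cdot)$ displayed in the statement.

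The remaining work is purely bookkeeping the constants so that everything collapses to the clean threshold $n\ge K\log(K/\delta)\big[1+18/\gapmv^2\big]$: one uses $\gapmv\le 1$ (it is a difference of conditional label probabilities within a region, hence at most $1$) to absorb lower-order terms, together with coarse logarithm estimates such as $\log(2K^3/\delta)\le 4\log(K/\delta)$ for $K\ge2$, $\delta\in(0,1)$. Conceptually, the ``$1$'' in the bracket is the price of forcing every decision region to collect $\Theta(n/K)$ labeled points, while the ``$18/\gapmv^2$'' is the price of the within-region majority vote concentrating around the population fractions $\E_*\chi_{bj}(\mix)$ to within $\gapmv/3$. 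The only genuine obstacle is this translation step itself: Theorem~\ref{thm:main:vote} is phrased through the random quantity $\min_b m_b$, and the corollary must convert it into a guarantee stated in terms of the total labeled sample size $n$ --- which is exactly what the multinomial minimum-cell bound of Lemma~\ref{lem:multinomial:min} supplies.
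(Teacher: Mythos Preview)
Your proposal is correct and follows essentially the same route as the paper: apply Lemma~\ref{lem:multinomial:min} with uniform cell probabilities $p_b=1/K$ to lower-bound $\min_b m_b$, then plug into the bound \eqref{eq:samp:vote} from Theorem~\ref{thm:main:vote}. The paper's own argument is a two-line sketch that does not explicitly split $\delta$ or discuss the conditional nature of Theorem~\ref{thm:main:vote}; your version is more careful on both points, and your alternative MGF route is a nice variant, but neither departs from the paper's strategy.
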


\paragraph{Coupon collector's problem and SSL} To better understand these bounds, consider arguably the simplest possible case: Suppose that each density $\truedenscmp_{k}$ has disjoint support, $\truewgt_{k}=1/K$, and that we know $\truemix$. Under these very strong assumptions, an alternative way to learn $\trueperm$ is to simply sample from $\pr(X)$ until we have visited each decision region $\truedec_{k}$ at least once. This is the classical \emph{coupon collector's problem} (CCP), which is known to require $\Theta(K\log K)$ samples \citep{newman1960,flajolet1992}. Thus, under these assumptions the expected number of samples required to learn $\trueperm$ is $\Theta(K\log K)$. 
By comparison, our results indicate that \emph{even if the $\truedenscmp_{k}$ have overlapping supports} and \emph{we do not know $\truemix$}, as long as $\gapmle=\Omega(1)$ (resp. $\gapmv=\Omega(1)$) then $\Omega(K\log K)$ samples suffice to learn $\trueperm$. In other words, SSL is approximately as difficult as CCP in very general settings.

\subsection{Classification error}
\label{sec:theory:error}

So far our results have focused on the probability of recovery of the unknown permutation $\trueperm$. 
In this section, we bound the classification error of the classifier \eqref{eq:def:classifier} in terms of the Wasserstein distance $\wass(\mix,\truemix)$ between $\mix$ and $\truemix$. We assume the following general set-up: We are given $m$ unlabeled samples from which we estimate $\mix$ by $\estmix_{m}$. Based on this mixing measure, we learn a permutation $\estperm_{m,n}$ from $n$ labeled samples, e.g. using either MLE \eqref{eq:defn:mle} or MV \eqref{eq:defn:vote}.  Together, the pair $(\estmix_{m},\estperm_{m,n})$ defines a classifier $\estclf_{m,n}$ via \eqref{eq:def:classifier}. We are interested in bounding the probability of misclassification $\pr(\estclf_{m,n}(X)\ne Y)$ in terms of the Bayes error.

\begin{thm}[Classification error]
\label{thm:main:error}
Suppose $\wass(\estmix_{m},\mix)=O(r_{m})$ for some $r_{m}\to0$ where $m$ is the number of unlabeled samples. Let $\trueclf=\clf_{\truemix,\trueperm}$ denote the Bayes classifier. Then there is a constant $C>0$ depending on $K$ and $\truemix$ such that if $\estperm_{m,n}=\trueperm$, 
\begin{align*}
\pr(\estclf_{m,n}(X)\ne Y)
\le \pr(\trueclf(X)\ne Y) + Cr_{m} + C\cdot \wass(\mix,\truemix).
\end{align*}
\end{thm}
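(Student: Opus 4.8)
The plan is to bound the misclassification probability $\pr(\estclf_{m,n}(X)\ne Y)$ by comparing $\estclf_{m,n}$ to the Bayes classifier $\trueclf$ in two stages: first pass from $\estclf_{m,n}=\clf_{\estmix_m,\trueperm}$ to $\clf_{\mix,\trueperm}$ (using $\wass(\estmix_m,\mix)=O(r_m)$), and then from $\clf_{\mix,\trueperm}$ to $\trueclf=\clf_{\truemix,\trueperm}$ (using $\wass(\mix,\truemix)$). In each stage the permutation is held fixed, so the only thing that changes is the collection of decision regions $\dec_b(\cdot)$. The key inequality is: for two mixing measures $\mix_1,\mix_2$ with the \emph{same} permutation $\perm$,
\begin{align*}
\pr\big(\clf_{\mix_1,\perm}(X)\ne\clf_{\mix_2,\perm}(X)\big)
\le \sum_{b} \pr\big(X \in \dec_b(\mix_1)\,\triangle\,\dec_b(\mix_2)\big),
\end{align*}
and then by the triangle inequality for $\pr(\cdot\ne Y)$ this controls the difference in misclassification probabilities. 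So the crux is to show that the $\pr$-measure of the symmetric difference of decision regions is $O(\wass(\mix_1,\mix_2))$, with a constant depending only on $K$ and on $\truemix$ (through, e.g., a margin/density-type quantity for the true decision boundaries).

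Concretely, I would argue as follows. Fix a coupling $\coupling$ achieving (nearly) the Wasserstein distance $\wass(\mix_1,\mix_2)$, so that $\sum_{i}\coupling_{ii}\,\dTV(\denscmp_i^{(1)},\denscmp_i^{(2)})$ and the off-diagonal mass together are $\le \wass(\mix_1,\mix_2)+\eps$; since $\mix_1,\mix_2$ each have exactly $K$ atoms and we are comparing them along a common enumeration, one can further reduce to the case where the coupling is (close to) the identity matching, so that the relevant quantity is $\sum_b |\wgt_b^{(1)}-\wgt_b^{(2)}| + \sum_b \dTV(\denscmp_b^{(1)},\denscmp_b^{(2)})$, both bounded by a constant times $\wass(\mix_1,\mix_2)$. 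A point $x$ lies in $\dec_b(\mix_1)\triangle\dec_b(\mix_2)$ only if the two weighted densities that are largest at $x$ are within $2\big(\max_b\|\wgt_b^{(1)}\denscmp_b^{(1)}-\wgt_b^{(2)}\denscmp_b^{(2)}\|_{\infty\text{-free}}\big)$ of each other — more precisely, only if some pair of indices $b,j$ has $|\wgt_b^{(1)}\denscmp_b^{(1)}(x)-\wgt_j^{(1)}\denscmp_j^{(1)}(x)|$ small relative to the perturbation. Integrating the pointwise perturbation $|\wgt_b^{(1)}\denscmp_b^{(1)}(x)-\wgt_b^{(2)}\denscmp_b^{(2)}(x)|$ against $\truedens$ and summing over $b$ gives the $L^1$-type bound, which is exactly $O\big(\wass(\mix_1,\mix_2)\big)$; the constant absorbs the $K$ terms in the sum and the boundedness assumption implicit in working with well-defined densities. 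Feeding this into the two-stage triangle inequality, together with $\wass(\estmix_m,\mix)=O(r_m)$, yields the stated bound with $Cr_m + C\wass(\mix,\truemix)$.

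The main obstacle is making the passage from ``small $L^1$ perturbation of the weighted densities'' to ``small $\pr$-measure of the symmetric difference of the argmax regions'' quantitative with a clean constant. Naively, a uniformly small perturbation of the functions $\wgt_b\denscmp_b$ can still move the decision boundary by an amount that depends on how flat the difference $\wgt_b\denscmp_b - \wgt_j\denscmp_j$ is near that boundary — i.e. on a margin condition for $\truemix$. The statement sidesteps this by allowing the constant $C$ to depend on $\truemix$, so I would phrase the boundary-stability step as: there is $C=C(K,\truemix)$ such that for any $\mix'$, $\sum_b \pr(X\in\dec_b(\mix')\triangle\dec_b(\truemix)) \le C\,\wass(\mix',\truemix)$, proved by the integration argument above where $C$ captures the relevant density/margin behaviour of $\truemix$ on a neighbourhood of its decision boundaries (and, for the first stage, the analogous comparison between $\estmix_m$ and $\mix$ routed through $\truemix$ via one more triangle inequality, or directly if one is willing to let $C$ depend on $\mix$ as well — but since $\wass(\mix,\truemix)$ already appears, the cleanest route keeps everything anchored at $\truemix$). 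A secondary, easier point is handling the event $\{\estperm_{m,n}\ne\trueperm\}$ — but the hypothesis $\estperm_{m,n}=\trueperm$ removes it, so the entire argument is deterministic given that event.
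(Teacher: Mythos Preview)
Your two-stage plan introduces a genuine gap at exactly the step you flag as ``the main obstacle''. You want to bound
\[
\sum_{b}\pr\big(X\in\dec_{b}(\mix_{1})\triangle\dec_{b}(\mix_{2})\big)
\;\le\; C\,\wass(\mix_{1},\mix_{2}),
\]
and you propose to absorb whatever margin behaviour is needed into $C=C(K,\truemix)$. But no margin condition is part of the hypotheses, and for a generic $\truemix$ the passage from a small $L^{1}$ perturbation of the weighted densities $\wgt_{b}\denscmp_{b}$ to a small $\truedens$-measure of the symmetric difference of the argmax regions simply fails: if two of the functions $\truewgt_{b}\truedenscmp_{b}$ are tangent (or equal) on a set of positive measure near the boundary, an arbitrarily small perturbation can flip the argmax on a region whose $\pr$-measure does not shrink. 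So the constant you are trying to define need not be finite, and the proof as written does not close.

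The paper sidesteps this entirely by \emph{not} triangulating at the level of classifiers. It compares $\estmix_{m}$ directly to $\truemix$ via the standard plug-in bound
\[
\pr(\estclf(X)\ne Y)-\pr(\trueclf(X)\ne Y)
\;\le\;\sum_{b}\int_{\baseX}\big|\estwgt_{b}\estdenscmp_{b}(x)-\truewgt_{b}\truedenscmp_{b}(x)\big|\,dx,
\]
which holds with no margin assumption (the excess risk on the region where the argmax flips is automatically small because the weighted densities are close there). Only \emph{after} this does the triangle inequality enter, and it is applied to the $L^{1}$ integrals, inserting $\wgt_{b}\denscmp_{b}$ between $\estwgt_{b}\estdenscmp_{b}$ and $\truewgt_{b}\truedenscmp_{b}$. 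The first piece is $O(r_{m})$ by assumption; the second is controlled by $\wass(\mix,\truemix)$ via a lemma that converts small Wasserstein distance between $K$-atomic measures into componentwise bounds $|\wgt_{b}-\truewgt_{b}|+\dTV(\denscmp_{b},\truedenscmp_{b})\le C(\truemix)\,\wass(\mix,\truemix)$. That lemma is where the dependence of $C$ on $\truemix$ actually comes from---it is a separation/identifiability constant for the atoms of $\truemix$, not a margin constant for its decision boundaries. Reworking your argument along these lines (single comparison to $\truemix$, plug-in bound first, triangle inequality on $L^{1}$ afterwards) removes the gap.
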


This theorem allows for the possibility that the mixture model learned from the unlabeled data (i.e. $\estmix_{m}$) does not converge to the true mixing measure $\truemix$. In this case, there is an irreducible error quantified by the Wasserstein distance $\wass(\mix,\truemix)$. When $\wass(\mix,\truemix)=0$, however, we can improve this upper bound considerably to yield nonasymptotic rates of convergence to the Bayes error rate:
\begin{cor}
\label{cor:bayes:error}
If $\wass(\estmix_{m},\truemix)=O(r_{m})$ for some $r_{m}\to0$, then the excess risk of $\estclf_{m,n}$ converges to zero at the same rate as $\wass(\estmix_{m},\truemix)$:
\begin{align*}
\pr(\estclf_{m,n}(X)\ne Y) - \pr(\trueclf(X)\ne Y) 
= O(r_{m}).
\end{align*}
\end{cor}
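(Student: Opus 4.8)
\emph{Proof idea.} The plan is to obtain this as the $\wass(\mix,\truemix)=0$ case of Theorem~\ref{thm:main:error}. That theorem is stated for an arbitrary target mixing measure $\mix$ — the one the unlabeled estimator $\estmix_m$ approximates — so it is legitimate to take $\mix=\truemix$. Under the corollary's hypothesis $\wass(\estmix_m,\truemix)=O(r_m)$ this choice satisfies the theorem's hypothesis $\wass(\estmix_m,\mix)=O(r_m)$, and the ``irreducible'' term becomes $C\cdot\wass(\truemix,\truemix)=0$. Hence, on the event $\{\estperm_{m,n}=\trueperm\}$,
\[
\pr(\estclf_{m,n}(X)\ne Y)-\pr(\trueclf(X)\ne Y)\le Cr_m=O(r_m),
\]
which is the assertion. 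Note that $C$ depends only on $K$ and $\truemix$, so setting $\mix=\truemix$ introduces no new dependence on $\estmix_m$.

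The one piece that needs care is discharging the hypothesis $\estperm_{m,n}=\trueperm$ carried by Theorem~\ref{thm:main:error}. First I would invoke Proposition~\ref{prop:gap:mle} / Proposition~\ref{prop:gap:vote}, which give $\gapmle(\truemix)\ge0$ and, crucially, $\gapmv(\truemix)>0$. Then, since $\estmix_m$ is close to $\truemix$ in $\wass$ for $m$ large and $\estperm_{m,n}$ is the MLE or MV computed from $\estmix_m$, I would use a stability argument — continuity of $\mix\mapsto\gapmv(\mix)$ (equivalently of the expectations $\E_*\chi_{bj}(\mix)$ that define it) under $\wass$ — to conclude that $\gapmv(\estmix_m)$ stays bounded away from $0$, so that Theorem~\ref{thm:main:vote} yields $\pr(\estperm_{m,n}=\trueperm)\to1$ exponentially fast in $\min_b m_b$ (and analogously for the MLE whenever $\gapmle>0$). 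On the complement of $\{\estperm_{m,n}=\trueperm\}$ the misclassification probability is at most $1$, so the overall excess risk is at most $Cr_m+\pr(\estperm_{m,n}\ne\trueperm)$; taking $n$ large relative to $m$ absorbs the last term into the $O(r_m)$ rate. If one prefers, the corollary can simply be read conditionally on correct recovery, exactly as Theorem~\ref{thm:main:error} is.

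The main obstacle is therefore the Wasserstein-continuity of the gap functionals used in the step above: one must show that perturbing $\mix$ by a small amount in $\wass$ perturbs each $\E_*\chi_{bj}(\mix)$ (and each $\E_*\nll(\perm;\mix,X,Y)$) by a correspondingly small amount, so that the strict margin in Proposition~\ref{prop:gap:vote} is preserved for $\estmix_m$. This is the same machinery underlying the proof of Theorem~\ref{thm:main:error} — where the decision regions $\dec_b(\mix)$ are already tracked as $\mix$ varies — so I expect it to be available with essentially no extra work, leaving only routine bookkeeping of constants to complete the argument.
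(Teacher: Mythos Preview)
Your proposal is correct and matches the paper's approach: the paper gives no separate proof of this corollary, treating it as immediate from Theorem~\ref{thm:main:error} with $\mix=\truemix$, so that the irreducible term $C\cdot\wass(\mix,\truemix)$ vanishes and only $Cr_m$ remains. Your additional discussion of discharging the hypothesis $\estperm_{m,n}=\trueperm$ via Wasserstein-continuity of the gap functionals goes beyond what the paper provides --- the paper, like Theorem~\ref{thm:main:error} itself, simply reads the corollary conditionally on correct permutation recovery.
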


\paragraph{Clairvoyant SSL}
Previous work \citep{castelli1995,castelli1996,singh2009} has studied the so-called \emph{clairvoyant} SSL case in which it is assumed that we know \eqref{eq:ident:mix} perfectly. This amounts to taking $\estmix_{m}=\mix$ in the previous results, or equivalently $m=\infty$. Under this assumption, we have perfect knowledge of the decision regions and only need to learn the label permutation $\trueperm$. Then Corollary~\ref{cor:bayes:error} implies that with high probability, we can learn a Bayes classifier for the problem using finitely many labeled samples.

\paragraph{Convergence rates}
The convergence rate $r_{m}$ used here is essentially the rate of convergence in estimating an identifiable mixture model, which is well-studied for parametric mixture models \citep{heinrich2015,ho2016,ho2016singularity}. In particular, for so-called \emph{strongly} identifiable parametric mixture models, the minimax rate of convergence attains the optimal root-$m$ rate $r_{m}=m^{-1/2}$ \citep{heinrich2015}.\footnote{This paper corrects an earlier result due to \citet{chen1995} that claimed an $m^{-1/4}$ minimax rate.} Asymptotic consistency theorems for nonparametric mixtures can be found in \citet{aragam2018npmix}.

\paragraph{Comparison to supervised learning (SL).}
Previous work \citep{singh2009} has compared the sample complexity of SSL to SL under a cluster-type assumption. While a precise characterization of these trade-offs is not the main focus of this paper, we note in passing here the following: If the minimax risk of SL for a particular problem is larger than $\wass(\mix,\truemix)$, then Theorem~\ref{thm:main:error} implies that SSL provably outperforms SL on finite samples.

\subsection{Discussion of conditions}
\label{app:conditions}

Here we have a simple experiment with the underlying distribution being a mixture of two Gaussians:
\[F = \frac{1}{2} \lambda_1^* + \frac{1}{2} \lambda_2^* = \frac{1}{2} \normalN(-\mu,1) + \frac{1}{2} \normalN(\mu,1)\]
where $\mu$ is a small positive number indicating the separation between two Gaussians. We would like to compare the number of samples needed to recover the true permutation $\trueperm$ with probability $(1-\delta)$ for both MLE and MV. 

Our experiments show that both estimators have roughly $O(\mu^{-2})$ sample complexity when $\mu \rightarrow 0^+$, but MV needs about \textbf{4 times} as many samples as the MLE. In fact, our theory can verify the sample complexity of MV: The gap $\gapmv$ is $\Phi(\mu)- \Phi(-\mu) = O(\mu) $ and the sample complexity has $\log(K/\delta)/\gapmv^2$ dependence with $\gapmv$, which gives exactly $O(\mu^{-2})$. Here $\Phi(\mu)$ is the cumulative distribution function of standard normal random variable. Unfortunately, the intractable form of the dual functions $\beta_{b}^{*}$ makes similar analytical comparisons difficult.

\section{Algorithms}
\label{sec:alg}

One of the significant appeals of MV \eqref{eq:defn:vote} is its simplicity. It is conceptually easy to understand and trivial to implement. The MLE \eqref{eq:defn:mle}, on the other hand, is more subtle and difficult to compute in practice. In this section, we discuss two algorithms for computing the MLE: 1) An exact algorithm based on finding the maximum weight perfect matching in a bipartite graph by the Hungarian algorithm \citep{kuhn1955}, and 2) Greedy optimization.

Define $C_k = \{i : \Yi=\class_{k}\}$. Consider the weighted complete bipartite graph $G=(V_{K,K},w)$ with edge weights
\begin{align*}
w(k, k') 
= \sum_{i \in C_k } \log \big( \wgt_{k'} \denscmp_{k'}(\Xi) \big), 
\quad \forall k,k' \in [K] 
\end{align*}
\noindent
Since a permutation $\perm$ defines a perfect matching on $G$, the log-likelihood can be rewritten as 
\begin{align*}
\nll_{n}(\perm;\mix) 
= \sum_{k=1}^K \sum_{i\in C_{k}}\log \big( \lambda_{\perm(\class_{k}) } \denscmp_{\perm(\class_{k}) }(\Xi)\big)
= \sum_{k=1}^K  w(k, \perm(\class_{k})),
\end{align*}
\noindent
the right side of which is the total weight of the matching $\perm$. Hence, the maximizer $\estmle$ can be found by finding a perfect matching for this graph that has maximum weight. This can be done in $O(K^{3})$ using the well-known Hungarian algorithm \citep{kuhn1955}.

We can also approximately solve the matching problem by a greedy method: Assign the $k$th class to
\begin{align*}
\estgreedy(\class_{k}) 
= \argmax_{k'\in[K]} w(k,k') = \argmax_{k'\in[K]} \sum_{i \in C_k } \log \big( \wgt_{k'} \denscmp_{k'}(\Xi) \big),
\end{align*}
This greedy heuristic isn't guaranteed to achieve optimal matching, however, it is simple to implement and can be viewed as a ``soft interpolation'' of $\estmle$ and $\estmv$ as follows:
If we define $w_{\rm{MV}}(k,k') = \sum_{i \in C_k } \ind(\Xi\in\dec_{k'}(\mix)) $, we can see that a training example $(\Xi, \Yi=\class_{k})$ contributes $1$ to $w_{\rm{MV}}(k,k')$ if $k' = \argmax_{j}\wgt_{j} \denscmp_{j}(\Xi)$, and contributes $0$ to $w_{\rm{MV}}(k,k')$ otherwise. By comparison, for the greedy heuristic, a training example $(\Xi, \Yi=\class_{k})$ contributes $\log (\wgt_{k'} \denscmp_{k'}(\Xi)) $ to $w(k,k')$. Therefore, the greedy estimator can be seen as a ``soft'' version of MV that also greedily optimizes the MLE objective.

\section{Experiments}
\label{sec:exp}

In order to evaluate the performance of the proposed estimators in practice, we implemented each of the three methods described in Section~\ref{sec:alg} on simulated and real data. Our experiments considered three settings: (i) Parametric mixtures of Gaussians, (ii) A nonparametric mixture model, and (iii) Real data from MNIST.
All three experiments followed the same pattern: A random mixture model $\truemix$ was generated, and then $N=99$ labeled samples were drawn from this mixture model. We generated $\truemix$ under different separation conditions, from well-separated to overlapping. Then, $\mix$ was generated in two ways: (a) $\mix=\truemix$, corresponding to a setting where the true decision boundaries are known, and (b) $\mix\ne\truemix$ by perturbing the components and weights of $\truemix$ by a parameter $\eta>0$ (see below for details). Then $\mix$ was used to estimate $\trueperm$ using each of the three algorithms described in the previous section for the first $n=3,6,9,\ldots,99$ labeled samples. This procedure was repeated $T=50$ times (holding $\truemix$ and $\mix$ fixed) in order to estimate $\pr(\estperm=\trueperm)$. Figure~\ref{fig:k16} depicts some examples of the mixtures used in our experiments.

\begin{figure}[t]
\includegraphics[width=0.5\textwidth]{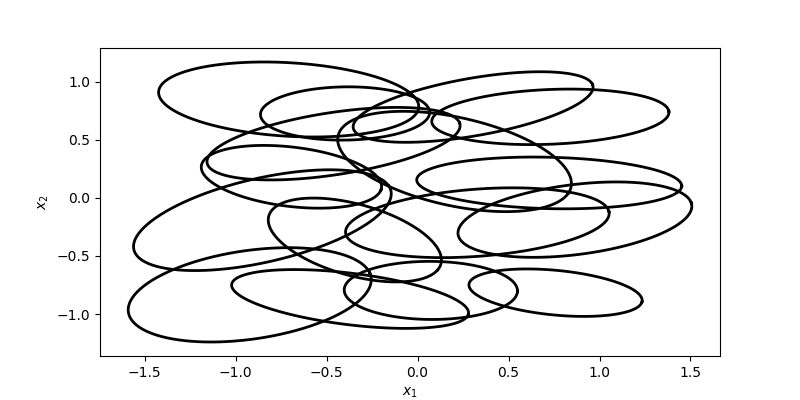}
\includegraphics[width=0.5\textwidth]{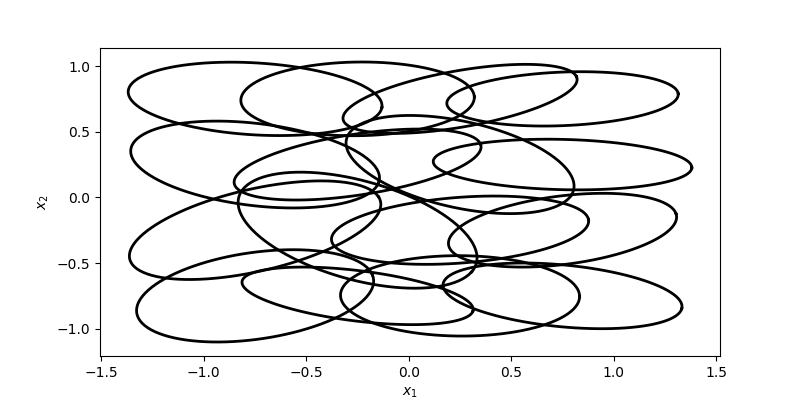}\\
\includegraphics[width=0.5\textwidth]{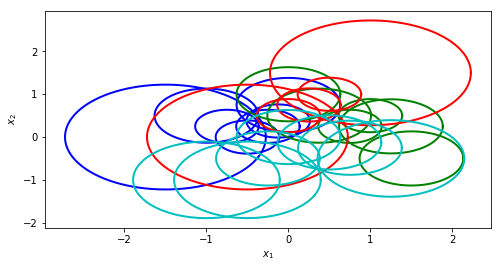}
\includegraphics[width=0.5\textwidth]{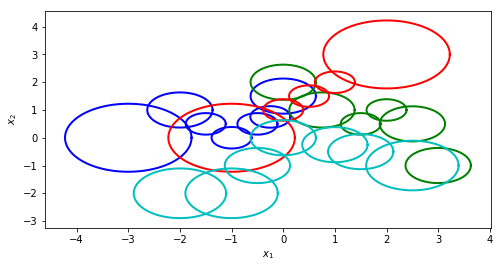}
\caption{Examples of some examples used in the experiments. Depicted are contour lines of the densities for one standard deviation from the mean. (top) Mixture of Gaussians with $K=16$. (bottom) Nonparametric mixture of Gaussian mixtures; each Gaussian component is coloured according to the class label it generates.}
\label{fig:k16}
\end{figure}




\paragraph{Mixture of Gaussians}  The first experiment uses synthetic data where $F = \sum_k \lambda_k^* f_k^*$ is a mixture of Gaussians with $\lambda_k^*$ being randomly drawn from a uniform distribution $\mathcal{U}(0,1)$ (normalized afterwards) and $f_k^*$ being a Gaussian density. The $f_k^*$ were arranged on a square grid with randomly generated positive-definite covariance matrices.

To explicitly control how well-separated the Gaussians are, we shrink the expectations of the Gaussians towards the origin using a parameter $\eta$ where $\eta\in\{1, 0.75, 0.5\}$. We design the means of the Gaussians so that they are on a grid centered at the origin. The mean of each Gaussian component is thus given by $\eta\mu_{k}^{*}$, where $\mu_{k}^{*}$ is the mean of the $k$th density. When $\eta = 1$, components in the mixture are well-separated where $\{f_k^*\}_{k=1}^K$ have no or very little overlap within one standard deviation. The smaller the $\eta$ is, the more overlapping the components are. For each choice of dimension $d\in\{2,10\}$, $K$ is varied between $\{2, 4 ,9, 16\}$. 

\paragraph{Perturbed mixture of Gaussians} In this setting, we test the case where $\Lambda^*$ is unknown and the algorithms only have access to its perturbed version $\Lambda$. Similar to the above setups, we sample $n$ labeled data using $\Lambda^*$. However, instead of feeding the algorithms the true mixture $\Lambda^*$, we input $\Lambda$ where mixture weights are shifted: Each dimension of the means of the Gaussians are shifted by a random number drawn from $\mathcal{N}(0,0.1)$ and the variance of each Gaussians is scaled by either $0.5$ or $2$ (chosen at random).

\paragraph{Mixture of Gaussian mixtures and its perturbation} 
This experiment is similar to the first experiment with a mixture of Gaussians except each $f_k^*$ is itself a Gaussian mixture. We also controlled the degree of separation by shrinking the expectation of each Gaussian towards the origin with $\eta\in\{1,0.5\}$.  

\paragraph{MNIST and corrupted MNIST}  We trained $10$ kernel density estimators (one for each digit) for $\{f_k\}_{k=1}^{10}$. These mixtures are used to define the true mixture $\Lambda^{*}$. We then tested, under corruption of the labeled samples from the test set, how the three algorithms behave. With probability $0.1$, the label of the sampled data is changed to an incorrect label. 

\bigskip
The results are depicted in Figure~\ref{fig:mnist}. As expected, the MLE performs by far the best, obtaining near perfect recovery of $\trueperm$ with fewer than $n=20$ labeled samples on synthetic data, and fewer than $n=40$ on MNIST. Unsurprisingly, the most difficult case was $K=16$, in which only the MLE was able recover the true permutation $>50\%$ of the time. By increasing $n$, the MLE is eventually able to learn this most difficult case, in accordance with our theory. Furthermore, the MLE is much more robust to misspecification $\mix\ne\truemix$ and component overlap compared to the others. This highlights the advantage of leveraging density information in the MLE, which is ignored by the MV estimator (i.e. MV only uses decision regions).

\begin{figure}[hbt]
\begin{center}
\begin{subfigure}{\textwidth}
  \centering
  \includegraphics[width=0.8\linewidth]{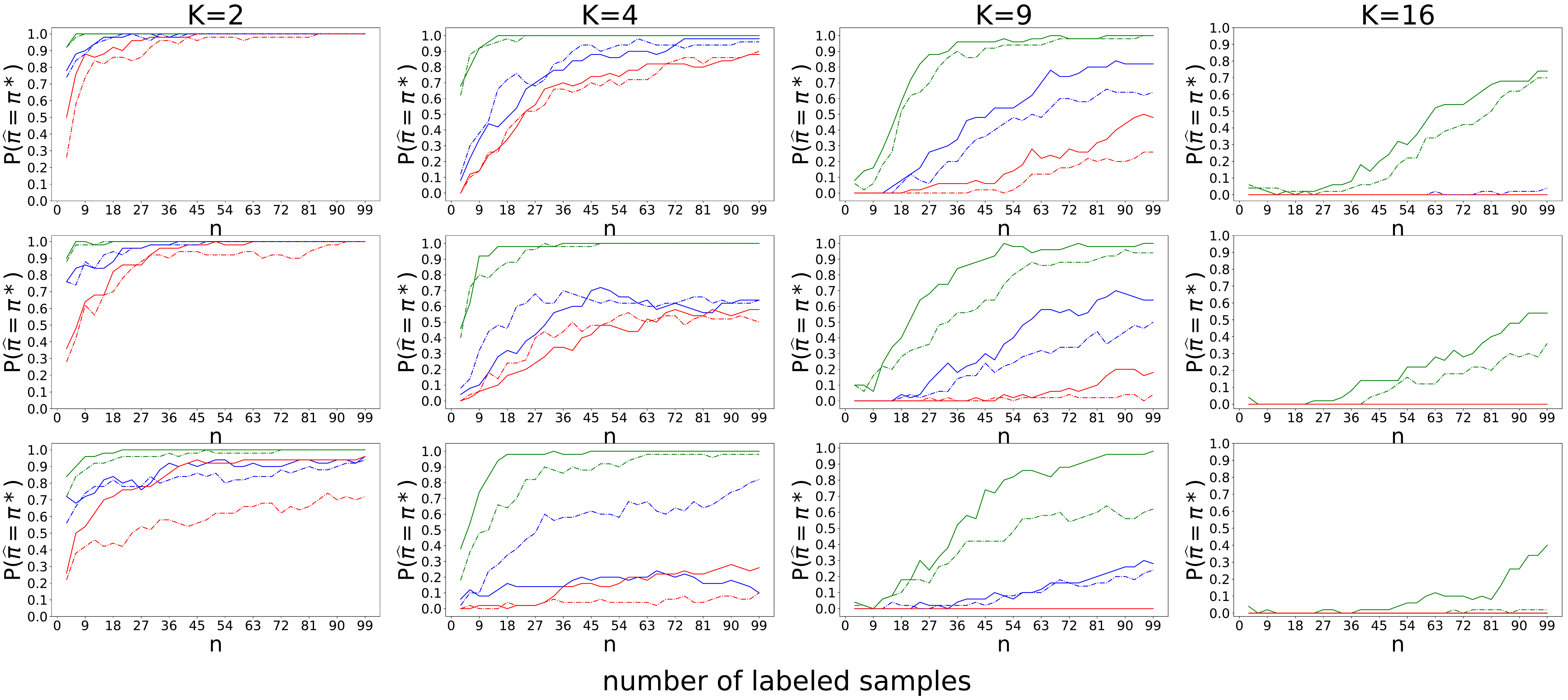}
  \caption{Mixture of Gaussians}
  \label{subfig:momog}
\end{subfigure}\\
\begin{subfigure}{.44\textwidth}
  \centering
  \includegraphics[width=0.8\linewidth]{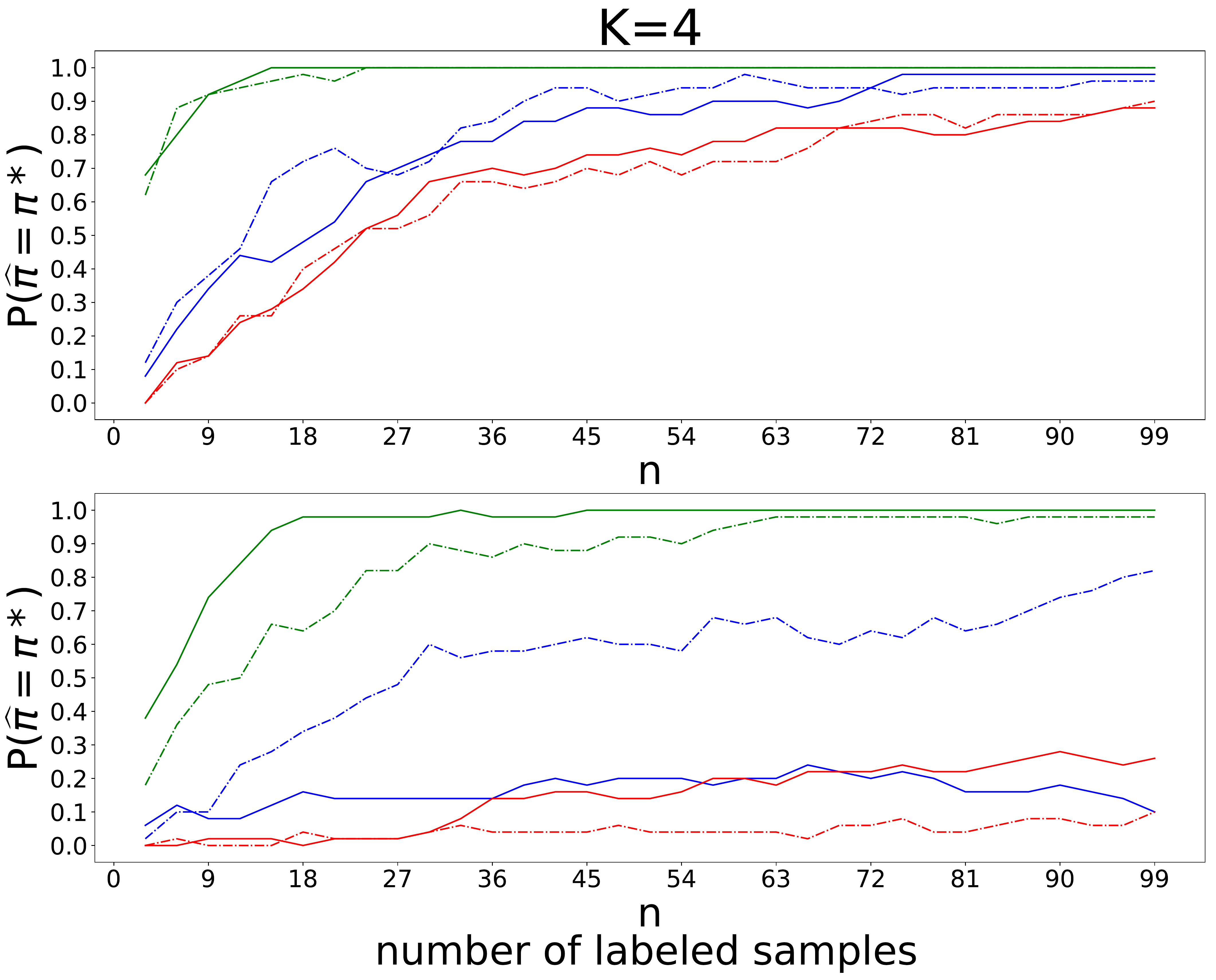}
  \caption{Mixture of Gaussian mixtures}
  \label{subfig:momog}
\end{subfigure}
~
\begin{subfigure}{.44\textwidth}
  \centering
  \vspace{1mm}
  \includegraphics[width=0.8\linewidth]{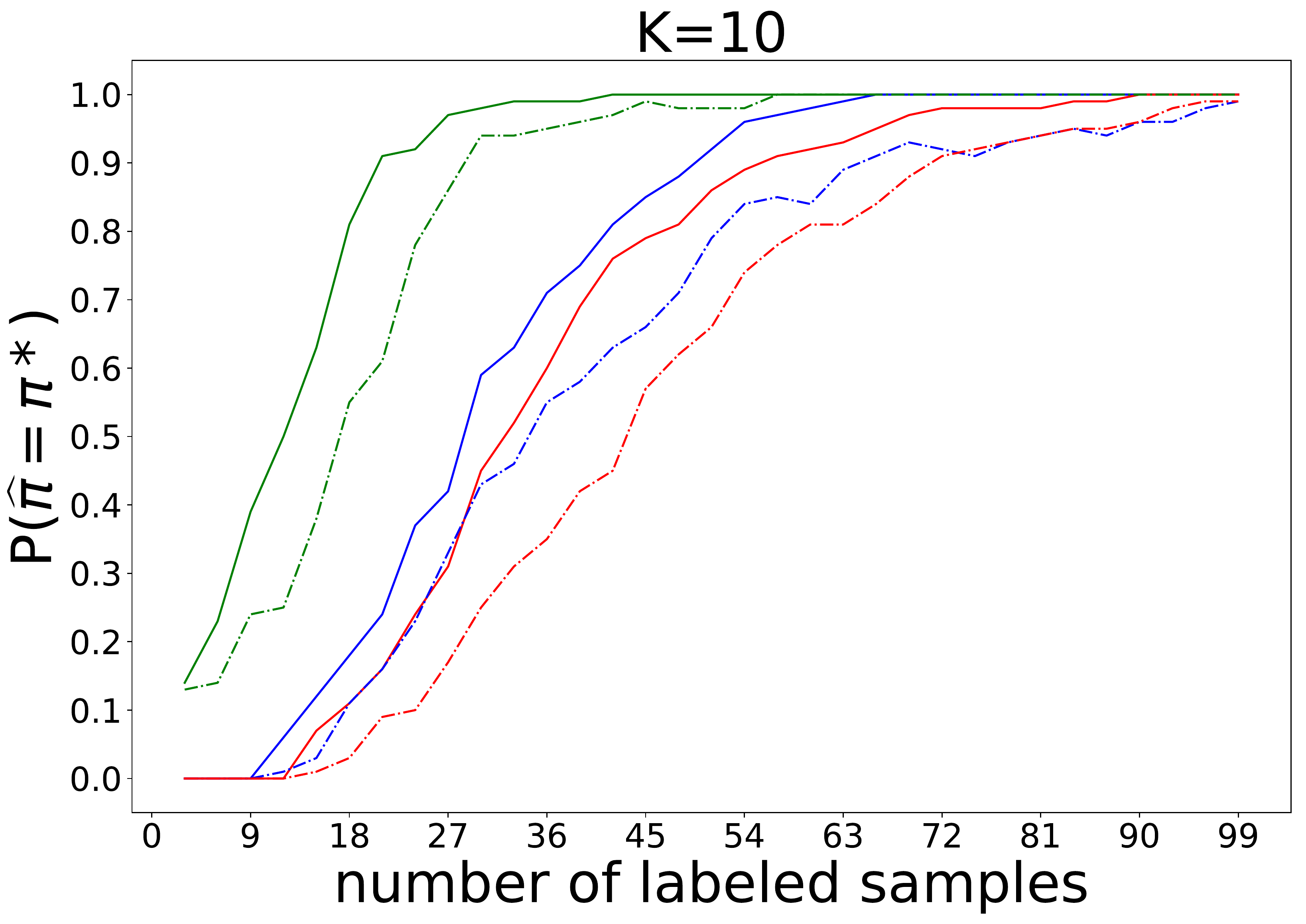}
  \caption{MNIST}
  \label{subfig:mnist}
\end{subfigure}
\caption{Performance of MLE (Hungarian - Green; Greedy - Blue) and MV (Red). Solid line and dashed line correspond to the performance when $\Lambda^* = \Lambda$ and  $\Lambda^* \neq \Lambda$, respectively. Columns correspond to the number of classes $K$; rows correspond to decreasing separation; e.g. the bottom rows in each figure are the least separated.}
\label{fig:mnist}
\end{center}
\end{figure}

\appendix

\section{Proofs}

\subsection{Proof of Theorem~\ref{thm:main:mle}}

\begin{proof}
    Denote a maximizer of the expected log-likelihood by $\identperm\in\argmax\E_{*}\nll(\perm;\mix)$ and define $\gap(\perm) = \E_{*}\nll(\identperm;\mix,X,Y) - \E_{*}\nll(\perm;\mix,X,Y)$. Note that $\gap(\perm)\ge\gap>0$ for all $\perm\ne\identperm$.
    Define $\mathcal{A}_{\perm}(t)=\{|\nll(\perm;\mix,X,Y)-\E_{*}\nll(\perm;\mix,X,Y))|<t\}$. 

    Then for any $t<\gap/2\le\gap(\pi)/2$, on the event $\cap_{\perm}\mathcal{A}_{\perm}(t)$ we have
    \begin{align*}
    \nll(\identperm;\mix,X,Y)
    &> \E_{*}\nll(\identperm;\mix,X,Y) - t \\
    &> \E_{*}\nll(\pi;\mix,X,Y) + \gap(\perm) - 2t \\
    &> \nll(\pi;\mix,X,Y)
    \quad\forall \pi \ne \identperm.
    \end{align*}
    
    Invoking Lemma~\ref{lem:conc:bound:perm} with $g_{k}(X,Y)=\log \wgt_{k}\denscmp_{k}(X,Y)$, we have 
    \begin{align*}
    \pr(\cap_{\pi}\mathcal{A}_{\pi}( t) ) 
    &= \pr \Big(\forall \pi, \Big|\frac{1}{n} \sum_{i=1}^n \nll(\identperm;\mix,\Xi,\Yi) - \E_{*}\nll(\identperm;\mix,\Xi,\Yi)\Big| \leq t
    \Big) \\
    & \geq 1 - 2K^2 \exp(-\inf_k \inf_b n_k \beta_{b}^* (t))
    \end{align*}
    
    Therefore, making the arbitrary choice of $t=\gap/3$,
    \begin{align*}
    \pr(\wh{\pi}=\wt{\pi}) &= \pr\big(\nll(\identperm;\mix,X,Y) > \nll(\perm;\mix,X,Y) \,\,\forall\pi\neq \wt{\pi}\big) \\
    & \geq 1 - 2 K^2\exp(-\inf_k \inf_b n_k \beta_{b}^* (\gap/3)).
    \end{align*}

    Since $\gap>0\implies \trueperm=\identperm$, the desired result follows.
\end{proof}

\subsection{Proof of Proposition~\ref{prop:gap:mle}}

\begin{proof}
    Let $p(x,y) = \truewgt_{\trueperm(y)} \truedenscmp_{\trueperm(y)} (x)$, $q(x,y)= \wgt_{\perm(y)}\denscmp_{\perm(y)} (x)$, so that
    \begin{align*}
        \E_{*}\nll(\trueperm;\truemix,X,Y) - \E_{*}\nll(\perm;\mix,X,Y) 
        &= \E_{*} \log(p(x,y)) - \E_{*} \log(q(x,y)) \\
        &= \int_{x} \sum_y  p(x,y)\log\frac{p(x,y)}{q(x,y)} dx \\
        &= \KL(p\,||\,q) \\
        & \geq 0.
    \end{align*}
    The equality holds if and only if $p(x,y) = q(x,y)$ holds for all $x,y$.
\end{proof}

\subsection{Proof of Theorem~\ref{thm:main:vote}}

\begin{proof}
We have
\begin{align*}
\pr(\wh{\pi}=\pi)
= \pr\big(\underbrace{\wh{\pi}(b)=b}_{\mathcal{E}_{b}} \,\,\forall b\in[K]\big)
= \pr\Big( 
\bigcap_{b=1}^{K}\mathcal{E}_{b}
\Big),
\end{align*}

\noindent
where
\begin{align*}
\mathcal{E}_{b}
= \Bigg\{\sum_{i=1}^{n}\ind(\Yi=b, \Xi\in\dec_{b}(\mix))
> \sum_{i=1}^{n}\ind(\Yi=j, \Xi\in\dec_{b}(\mix))
\quad \forall j\ne b
\Bigg\}.
\end{align*}

\noindent
Let $U_{bj}^{(i)}:=\ind(\Yi=j, \Xi\in\dec_{b}(\mix))$ so that $\chi_{bj}=\frac{1}{n_b}\sum_{i}U_{bj}^{(i)}$. It suffices to control the event
\begin{align}
\label{eq:main:event}
\Bigg\{\sum_{i=1}^{n}U_{bb}^{(i)}
> \sum_{i=1}^{n}U_{bj}^{(i)}
\quad \forall j\ne b
\Bigg\}
= \{\chi_{bb}>\chi_{bj} \,\, \forall j\ne b\}
\end{align}

\noindent
where $U_{j}^{(i)}\in\{0,1\}$ are i.i.d. random variables. Thus, we are interested in the probability $\pr(\chi_{bb}>\chi_{bj} \,\, \forall j\ne b)$. Note that
\begin{align*}
\E_{*}\chi_{bj}
= \frac{1}{n_b}\sum_{i=1}^{n}\E_{*} U_{bj}^{(i)}
= \frac{1}{n_b}\sum_{i:\Xi\in\dec_{b}}\pr(\Yi = j, \Xi\in\dec_{b}(\mix)).
\end{align*}

Define  
\begin{align}
\label{eq:defn:gap}
\gap_{bj}
:=\E_{*}\chi_{bb}-\E_{*}\chi_{bj}
\end{align}

\noindent
and $\mathcal{A}_{bj}(t)=\{|\chi_{bj}-\E_{*}\chi_{bj}|<t\}$. 
Then for any $t<\gap/2$, on the event $\cap_{j=1}^{K}\mathcal{A}_{bj}(t)$ we have
\begin{align*}
\chi_{bb}
> \E_{*}\chi_{bb} - t
> \E_{*}\chi_{bj} + \gap - 2t
> \chi_{bj}
\quad\forall j\ne b.
\end{align*}

In other words, making the arbitrary choice of $t=\gap/3$, we deduce
\[ \pr \big( \mathcal{E}_{b}^{c}\big) 
\leq \pr \Big( \bigcup_{j=1}^{K}\mathcal{A}_{j}(\Delta/3)^{c} \Big) 
\leq 2K\exp(-2n_b \gap^2/9)
\]

where we used Hoeffding's inequality to bound $\pr \big( \mathcal{A}_{j}(\gap/3)^{c}\big)$ for each $j$.

Thus
\begin{align*}
\pr\Big( 
\bigcap_{b=1}^{K}\mathcal{E}_{b}
\Big) 
&= 1 - \sum_{b=1}^K \pr \Big( \bigcup_{j=1}^{K}\mathcal{A}_{j}(\gap/3)^{c} \Big) \\
& \geq 1 - 2 K \sum_{b=1}^K \exp(-2n_b \gap^2/9)\\
& \geq 1 - 2 K^2 \exp\Big( \frac{-2\gap^{2}\min_b{n_b}}{9}\Big),
\end{align*}

\noindent
as claimed.
\end{proof}

\subsection{Proof of Proposition~\ref{prop:gap:vote}}

\begin{proof}
We have for any $j\ne b$,
\begin{align*}
\E_{*}\chi_{bb}(\truemix)
&= \frac{1}{n_b}\sum_{i=1}^{n}\E_{*}\ind(\Yi=b, \Xi\in\dec_{b}(\mix)) \\
&= \frac{1}{n_b}\sum_{i=1}^{n}\pr(\Yi=b, \Xi\in\dec_{b}(\mix)) \\
&= \frac{1}{n_b}\sum_{i=1}^{n}\pr(\Yi=b\given \Xi\in\dec_{b}(\mix))\pr(\Xi\in\dec_{b}(\mix)) \\
&> \frac{1}{n_b}\sum_{i=1}^{n}\pr(\Yi=j\given \Xi\in\dec_{b}(\mix))\pr(\Xi\in\dec_{b}(\mix)) \\
&= \E_{*}\chi_{bj}(\truemix).
\qedhere
\end{align*}
\end{proof}

\subsection{Proof of Corollaries~\ref{cor:sample:mle} and~\ref{cor:sample:vote}}

We prove Corollary~\ref{cor:sample:mle}; the proof of Corollary~\ref{cor:sample:vote} is similar with $n_{k}$ replaced by $m_{b}$ and $n_{0}$ in \eqref{eq:samp:mle} by $m_{0}$ in \eqref{eq:samp:vote}.

\begin{proof}
Using $p_{k}=1/K$ in Lemma~\ref{lem:multinomial:min}, we deduce for any $m>0$
\begin{align*}
\pr(\min_{k}n_{k}\ge m)
\ge 1 - K\exp\Big(-\frac{2K}{n}(n/K-m)^{2}\Big).
\end{align*}

\noindent
Thus, for any $\delta>0$, we have
\begin{align*}
n
\ge \frac{K}{2}\Big[\log(K/\delta) + 4m\Big]
\implies
\pr(\min_{k}n_{k}\ge m)
\ge 1-\delta.
\end{align*}

\noindent
The desired result follows from replacing $m$ with the lower bound on $n_{0}$ in \eqref{eq:samp:mle} and invoking Theorem~\ref{thm:main:mle}.
\end{proof}

\subsection{Proof of Theorem~\ref{thm:main:error}}

\begin{proof}
To avoid notational clutter, we will suppress the dependence on $m$ and $n$ in the following, so that $\estmix=\estmix_{m}$, $\estperm=\estperm_{m,n}$, $\estdec_{b}=\dec_{b}(\estmix_{m})$, and $\estclf=\estclf_{m,n}$. Write $\estdenscmp_{k}$ for the components of $\estmix$ and $\estwgt_{k}$ for the corresponding weights. Since $\estperm=\trueperm$, $\estdec_{b}$ corresponds to the decision region for label $\class_{b}$, and hence
\citep[see e.g. \sec2.5 in][]{devroye2013}
\begin{align}
\pr(\estclf(X)\ne Y)
\nonumber
&\le \pr(\trueclf(X)\ne Y)
+ \sum_{b}\pr(X\in\estdec_{b}\symdiff\truedec_{b}) \\
\label{eq:thm:main:error:proof:1}
&\le \pr(\trueclf(X)\ne Y)
+ \sum_{b}\int_{\mathcal{X}}|\estwgt_{b}\estdenscmp_{b}(x)-\truewgt_{b}\truedenscmp_{b}(x)|\,dx,
\end{align}

\noindent
where $\estdec_{b}\symdiff\truedec_{b}$ is the symmetric difference between the estimated and true decision regions.
Since $\wass(\estmix_{m},\mix)=O(r_{m})\to0$, we may assume without loss of generality that $\dTV(\estdenscmp_{b},\denscmp_{b})=O(r_{m})$ and $|\estwgt_{b}-\wgt_{b}|=O(r_{m})$. Focusing on the second quantity on the right hand side above, we have
\begin{align*}
\sum_{b}\int_{\mathcal{X}}|\estwgt_{b}\estdenscmp_{b}(x)-\truewgt_{b}\truedenscmp_{b}(x)|\,dx
&\le \sum_{b}\underbrace{\int_{\mathcal{X}}|\estwgt_{b}\estdenscmp_{b}(x)-\wgt_{b}\denscmp_{b}(x)|\,dx}_{(A)} + \sum_{b}\underbrace{\int_{\mathcal{X}}|\wgt_{b}\denscmp_{b}(x)-\truewgt_{b}\truedenscmp_{b}(x)|\,dx}_{(B)}.
\end{align*}

Now, for any $b$,
\begin{align*}
(A)
&\le |\estwgt_{b}-\wgt_{b}| + \wgt_{b}\,\dTV(\estdenscmp_{b},\denscmp_{b})
= O(r_{m}),
\end{align*}

\noindent
and invoking Lemma~\ref{lem:wass:bound},
\begin{align*}
(B)
&\le |\wgt_{b}-\truewgt_{b}| + \truewgt_{b}\,\dTV(\denscmp_{b},\truedenscmp_{b})
\le C(\truemix)\cdot \wass(\mix,\truemix).
\end{align*}

Thus
\begin{align*}
\sum_{b}\int_{\mathcal{X}}|\estwgt_{b}\estdenscmp_{b}(x)-\truewgt_{b}\truedenscmp_{b}(x)|\,dx
&\le K\big[O(r_{m}) + C(\truemix)\cdot \wass(\mix,\truemix)\big] \\
&\le Cr_{m} + C\cdot\wass(\mix,\truemix)
\end{align*}

\noindent
for some sufficiently large constant $C$ depending on $K$ and $\truemix$. Plugging this back into \eqref{eq:thm:main:error:proof:1} establishes the claim.
\end{proof}

\section{Additional lemmas}

\subsection{Lemma~\ref{lem:conc:bound:perm}}

For ease of notation in the following lemma, assume without loss of generality that $Y\in[K]$.

\begin{lemma}
\label{lem:conc:bound:perm}
Let $g_{1},\ldots,g_{K}$ be functions and $\psi_{k}(s)=\log\E_{*}\exp(s g_{k}(X,Y))$ be the log moment generating function of $g_{k}(X,Y)$. Then
\begin{align*}
\pr\Big(
\forall\pi : 
\frac1n\sum_{i=1}^{n}g_{\pi(Y_{i})}(X_{i}) - \E g_{\pi(Y_{i})}(X_{i})
\le t
\Big)
\ge 1 - K^{2}\exp(-\inf_{k}\inf_{b}n_{k}\psi_{b}^{*}(t)).
\end{align*}
\end{lemma}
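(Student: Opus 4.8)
The plan is to reduce the statement to a union bound over the $K^{2}$ possible values $(k,b)$ of the pair $(Y,\perm(Y))$. The key observation is that although there are $K!$ permutations $\perm$, the quantity $\frac1n\sum_i g_{\perm(Y_i)}(X_i) - \E g_{\perm(Y_i)}(X_i)$ only ever involves, for each label class $k$, the deviation of an empirical average of $g_{b}(X)$ (over the $n_{k}$ samples with $Y_i = k$) from its mean, where $b = \perm(k)$ ranges over $[K]$. So I would first rewrite
\begin{align*}
\frac1n\sum_{i=1}^{n}g_{\perm(Y_{i})}(X_{i}) - \E g_{\perm(Y_{i})}(X_{i})
= \sum_{k=1}^{K}\frac{n_{k}}{n}\Big(\frac{1}{n_{k}}\sum_{i:Y_i=k} g_{\perm(k)}(X_{i}) - \E\big[g_{\perm(k)}(X)\,\big|\,Y=k\big]\Big),
\end{align*}
conditioning on the label counts $n_{k}$ and on the labels $Y_{1},\dots,Y_{n}$ (so the $X_{i}$ with $Y_i=k$ are i.i.d.\ draws from $\pr(X\mid Y=k)$). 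Wait — I should be careful: the $\psi_{k}$ in the lemma are defined with the \emph{unconditional} expectation $\E_{*}$, and $\E_{*} g_{\perm(Y_i)}(X_i)$ in the statement is also unconditional; so rather than splitting by the conditional mean I would keep the centering as stated and instead bound, for each fixed pair $(k,b)$, the event $E_{k,b}(t) = \{\,|\frac{1}{n_{k}}\sum_{i:Y_i=k} (g_{b}(X_i) - \E_{*} g_{b}(X_i))| < t\,\}$ using a Chernoff/Cramér bound. Hmm, but then the centering inside and outside don't quite match; I'll have to either (i) interpret the lemma as using conditional expectations implicitly, or (ii) absorb the mismatch. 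The cleanest route, and the one I would take, is to define the deviation event relative to the quantity that actually appears and apply the union bound at the level of pairs.

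Concretely, the main steps in order: (1) Show that the complement event — some $\perm$ has $\frac1n\sum_i g_{\perm(Y_i)}(X_i) - \E_{*} g_{\perm(Y_i)}(X_i) > t$ — is contained in the union over pairs $(k,b)$ of ``bad'' single-index events; this uses the decomposition above together with the convexity/averaging fact that if every per-class term deviates by less than $t$ in the appropriate sense then the weighted combination does too. (2) For each fixed pair $(k,b)$, apply the Cramér--Chernoff bound to the i.i.d.\ sum $\frac{1}{n_{k}}\sum_{i:Y_i=k}(g_{b}(X_i)-\E g_b(X_i))$ conditionally on $\{Y_i\}$: the one-sided tail is bounded by $\exp(-n_{k}\psi_{b}^{*}(t))$ where $\psi_b$ is the relevant log-MGF (this is the standard large-deviations upper bound, valid since $\psi_b^{*}$ is the Legendre dual). (3) Lower-bound $\exp(-n_{k}\psi_{b}^{*}(t)) \ge \exp(-\inf_{k}\inf_{b} n_{k}\psi_{b}^{*}(t))$ — assuming $\psi_b^{*}(t)\ge 0$, which holds for $t$ on the relevant side of the mean — and take a union bound over the $\le K^{2}$ pairs $(k,b)$, which gives the factor $K^{2}$. (4) Remove the conditioning on $\{Y_i\}$ (hence on the $n_k$) by noting the bound holds pointwise on that conditioning, or by phrasing $\inf_k n_k$ as a random but known quantity, as is done elsewhere in the paper.

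The step I expect to be the main obstacle is (1): carefully justifying that controlling $K^{2}$ scalar deviation events simultaneously implies control of \emph{all} $K!$ permutations at once, and in particular getting the centering conventions consistent. There is a subtlety in that $\psi_{k}(s) = \log\E_{*}\exp(s g_{k}(X,Y))$ in the lemma statement takes the MGF of $g_k(X,Y)$ under the \emph{joint} law, whereas the natural per-class sum is over $X_i$ with $Y_i$ fixed; reconciling these (e.g.\ arguing the relevant tail for the per-class empirical average of $g_b$ is still dominated by $\exp(-n_k\psi_b^{*}(t))$, perhaps after noting $g_k(X,Y)=g_k(X)$ depends only on $X$ as in the application where $g_k(X,Y)=\log\wgt_k\denscmp_k(X)$) is the delicate bookkeeping. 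Everything else — the Chernoff bound, the union bound giving $K^2$, and monotonicity of $\psi_b^{*}$ — is routine.
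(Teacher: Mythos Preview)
Your plan is essentially the paper's proof: it groups by $b=\perm(Y_i)$ rather than by $k$, defines $Z_{b,k}=\frac{1}{n_k}\sum_{i\in C_k}(g_b(X_i)-\E g_b(X_i))$, applies Chernoff to each of the $K^{2}$ pairs, takes a union bound, and closes with the convex-combination observation that $\sup_{b,k}Z_{b,k}<t$ forces $\sum_b \tfrac{n_{\perm^{-1}(b)}}{n}Z_{b,\perm^{-1}(b)}<t$ for every $\perm$ --- so your ``main obstacle'' (1) is dispatched in one line. The centering/MGF subtlety you flag (joint-law $\psi_b$ versus per-class samples) is not resolved in the paper's proof either; it simply invokes Chernoff with $\psi_b^{*}$ as stated.
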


\begin{proof}
Define $C_{k}:=\{i : Y_{i}=k\}$, $n_{k}:=|C_{k}|$, and note that
\begin{align*}
\{i:\pi(Y_{i}) = b\}
= \{i:Y_{i} = \pi^{-1}(b)\}
= C_{\pi^{-1}(b)}.
\end{align*}

\noindent
Then we have the following:
\begin{align*}
Z :=
\frac1n\sum_{i=1}^{n}g_{\pi(Y_{i})}(X_{i}) - \E g_{\pi(Y_{i})}(X_{i})
&= \frac1n\sum_{k=1}^{K}\sum_{i:\pi(Y_{i}) = b}g_{b}(X_{i}) - \E g_{b}(X_{i}) \\
&= \frac1n\sum_{b=1}^{K}\sum_{i\in C_{\pi^{-1}(b)}}g_{b}(X_{i}) - \E g_{b}(X_{i}) \\
&= \frac1n\sum_{b=1}^{K}n_{\pi^{-1}(b)}\Big\{\underbrace{\frac1{n_{\pi^{-1}(b)}}\sum_{i\in C_{\pi^{-1}(b)}}g_{b}(X_{i}) - \E g_{b}(X_{i})}_{:=\wt{Z}_{b}(\pi)}\Big\} \\
&= \sum_{b=1}^{K}\frac{n_{b}(\pi)}{n}\wt{Z}_{b}(\pi).
\end{align*}

\noindent
Now, for each $\pi$, $\wt{Z}_{b}(\pi)$ is just a sum over one of $K$ possible subsets of $[n]$, i.e. samples indices. To see this, define
\begin{align*}
Z_{b,k} := \frac1{n_{k}}\sum_{i\in C_{k}}g_{b}(X_{i}) - \E g_{b}(X_{i})
\end{align*}

\noindent
and note that $\wt{Z}_{b}(\pi)=Z_{b,\pi^{-1}(b)}$ for each $b$. It follows that 
\begin{align*}
Z 
= \sum_{b=1}^{K}\frac{n_{b}(\pi)}{n}\wt{Z}_{b}(\pi)
= \sum_{b=1}^{K}\frac{n_{\pi^{-1}(b)}}{n}Z_{b,\pi^{-1}(b)}
\end{align*}

\noindent
Chernoff's inequality implies $\pr(Z_{b,k}\ge t)\le\exp(-n_{k}\psi_{b}^{*}(t))$ for each $b$ and $k$, which implies that
\begin{align*}
\pr(\sup_{b,k} Z_{b,k}< t)
&= \pr\Big(\bigcap_{k}\bigcap_{b}\big\{Z_{b,k}< t\big\}\Big) \\
&\ge 1 - \pr\Big(\bigcup_{k}\bigcup_{b}\big\{Z_{b,k}< t\big\}^{c}\Big) \\
&\ge 1 - \sum_{k=1}^{K}\sum_{b=1}^{K}\pr(Z_{b,k} \ge t) \\
&\ge 1 - \sum_{k=1}^{K}\sum_{b=1}^{K}\exp(-n_{k}\psi_{b}^{*}(t)) \\
&\ge 1 - K^{2}\exp(-\inf_{k}\inf_{b}n_{k}\psi_{b}^{*}(t)).
\end{align*}

Now, if $\sup_{b,k} Z_{b,k}< t$, then 
\begin{align*}
Z 
= \sum_{b=1}^{K}\frac{n_{\pi^{-1}(b)}}{n}Z_{b,\pi^{-1}(b)}
< \sum_{b=1}^{K}\frac{n_{\pi^{-1}(b)}}{n}t
= t
\end{align*}

\noindent
since $\sum_{b}n_{b}/n=1$ and $\pi$ is a bijection. The desired result follows.
\end{proof}

\subsection{Lemma~\ref{lem:multinomial:min}}

The following lemma gives a precise bound on the minimum number of samples $n$ required to ensure $\min_{k}n_{k}\ge m$ from a generic multinomial sample with high probability:

\begin{lemma}
\label{lem:multinomial:min}
Let $Y_{i}$ be a multinomial random variable such that $\pr(Y_{i}=k)=p_{k}$ and define $n_{k}=\sum_{i=1}^{n}1(Y_{i}=k)$. Then for any $m>0$,
\begin{align*}
\pr(\min_{k}n_{k}\ge m)
\ge 1 - \sum_{k=1}^{K}\exp\Big(-\frac{2}{np_{k}}(np_{k}-m)^{2}\Big).
\end{align*}
\end{lemma}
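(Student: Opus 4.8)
The plan is to reduce this $K$-dimensional statement to $K$ univariate lower-tail estimates and glue them with a union bound. First I would record the marginal law of each count: writing $Z_i^{(k)} := \ind(Y_i = k)$ (the $Y_i$ being independent), we have $n_k = \sum_{i=1}^n Z_i^{(k)}$ with the $Z_i^{(k)}$ i.i.d.\ $\BernoulliDist(p_k)$, so $n_k \sim \BinomialDist(n,p_k)$ and $\E n_k = np_k$. Only the marginal of each $n_k$ is needed, since the joint dependence of the counts is irrelevant for upper-bounding a union of events. Taking complements,
\[
\pr\big(\min_k n_k \ge m\big)
\;=\; 1 - \pr\Big(\bigcup_{k=1}^K \{n_k < m\}\Big)
\;\ge\; 1 - \sum_{k=1}^K \pr(n_k < m),
\]
so it suffices to prove the per-index bound $\pr(n_k < m) \le \exp\!\big(-\tfrac{2}{np_k}(np_k-m)^2\big)$; if $m \ge np_k$ the right-hand side need not be below $1$ and there is nothing to show, so assume $m < np_k$.

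For the per-index bound I would invoke the standard lower-tail Chernoff bound for a Binomial, $\pr(n_k \le m) \le \inf_{s\le 0} e^{-sm}\,\E e^{s n_k} = \exp\!\big(-n\,\KL(m/n\|p_k)\big)$, where $\KL(a\|p)=a\log\tfrac ap+(1-a)\log\tfrac{1-a}{1-p}$ is the Bernoulli relative entropy. This reduces the claim to the scalar inequality $n\,\KL(m/n\|p_k) \ge \tfrac{2}{np_k}(np_k-m)^2$ on the range $0\le m\le np_k$, which is an elementary (if constant-sensitive) one-variable estimate for the function $a\mapsto\KL(a\|p_k)$. Plugging the resulting tail bound back into the union bound above completes the argument.

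The step to watch is this last one. A direct application of Hoeffding's inequality to the bounded summands $Z_i^{(k)}\in[0,1]$ only yields the weaker exponent $\tfrac2n(np_k-m)^2$, with the ambient $n$ rather than the variance proxy $np_k$ in the denominator; extracting the stated, sharper bound requires a variance-sensitive (multiplicative / relative-entropy) Chernoff bound together with the accompanying scalar inequality for $\KL(\cdot\|\cdot)$. Pinning down that scalar inequality — and in particular the constant and the $np_k$-dependence in the exponent — is the only real content here; the marginalization and the union bound are routine.
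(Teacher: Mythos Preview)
Your overall strategy --- reduce to per-coordinate Binomial lower-tail bounds and glue them with a union bound --- is exactly what the paper does; the paper's entire proof is the union bound together with the one-line assertion ``by standard tail bounds on $n_k\sim\BinomialDist(n,p_k)$, we have $\pr(n_k\le m)\le\exp(-2(np_k-m)^2/(np_k))$.'' You go further than the paper by actually trying to \emph{justify} that per-index bound, and you correctly observe that Hoeffding only gives the weaker exponent $2(np_k-m)^2/n$.

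The gap is in the scalar step you flag at the end. The inequality you would need, $\KL(a\,\|\,p_k)\ge 2(p_k-a)^2/p_k$ for $0\le a\le p_k$, is \emph{false} in general: comparing second derivatives at $a=p_k$ gives $\tfrac{1}{p_k(1-p_k)}$ on the left versus $\tfrac{4}{p_k}$ on the right, so the inequality already fails locally whenever $p_k<3/4$. Concretely, with $m=0$, $n=10$, $p_k=0.1$ the lemma's per-index bound asserts $\pr(n_k=0)\le e^{-2}\approx 0.135$, while $\pr(n_k=0)=0.9^{10}\approx 0.349$. The standard multiplicative Chernoff bound $\pr\big(n_k\le(1-\delta)np_k\big)\le\exp(-\delta^2 np_k/2)$ only yields exponent $(np_k-m)^2/(2np_k)$, i.e.\ a constant $1/2$ rather than $2$. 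So the obstacle you sensed is real: the stated constant cannot be recovered by any of these routes, because the per-index bound as written is too strong. The paper simply asserts it without proof; the most charitable reading is that either $np_k$ should be $n$ (Hoeffding) or the $2$ should be $1/2$ (Chernoff). Either corrected version still delivers the $\Omega(K\log K)$ sample complexity used downstream in Corollaries~\ref{cor:sample:mle}--\ref{cor:sample:vote}, so the slip is harmless for the paper's purposes, but you should not expect to close the argument with the constant as stated.
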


\begin{proof}
By standard tail bounds on $n_{k}\sim\BinomialDist(n,p_{k})$, we have $\pr(n_{k}\le m)\le\exp(-2(np_{k}-m)^{2}/(np_{k}))$. Thus
\begin{align*}
\pr(\min_{k}n_{k}<m)
= \pr(\cup_{k=1}^{K}\{n_{k}< m\})
\le \sum_{k=1}^{K}\pr(n_{k}< m)
\le \sum_{k=1}^{K}\exp\Big(-\frac{2}{np_{k}}(np_{k}-m)^{2}\Big),
\end{align*}

\noindent
as claimed.
\end{proof}

\subsection{Lemma~\ref{lem:wass:bound}}

For any density $f\in L^{1}$, let $\delta_{f}$ denote the point mass concentrated at $f$, so that for any Borel subset $A\subset\alldens$, 
\begin{align*}
\delta_{f}(A)
= \begin{cases}
1, & f\in A \\
0, & f\notin A.
\end{cases}
\end{align*}

\begin{lemma}
\label{lem:wass:bound}
Let $\mix=\sum_{K=1}^{K}\wgt_{k}\delta_{\denscmp_{k}}$ and $\mix'=\sum_{K=1}^{K}\wgt_{k}'\delta_{\denscmp_{k}'}$. Then there is a constant $C=C(\mix',K)$ such that
\begin{align}
\label{lem:wass:bound:wgt}
\adjustlimits\sup_{j}\inf_{i}|\wgt_{i}-\wgt_{j}'|
&\le C\,\wass(\mix,\mix'), \\
\label{lem:wass:bound:denscmp}
\adjustlimits\sup_{j}\inf_{i}\dTV(f_{i}, f_{j}')
&\le C\,\wass(\mix,\mix').
\end{align}
\end{lemma}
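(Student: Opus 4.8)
The plan is to prove the two estimates separately, reading the constant $C$ off from two strictly positive quantities attached to $\mix'$: its smallest weight $\omega:=\min_{j}\wgt_{j}'$ and its smallest atom separation $\gamma:=\min_{j\ne j'}\dTV(\denscmp_{j}',\denscmp_{j'}')$ (both positive since $\mix'$ has exactly $K$ distinct atoms). Since the transportation polytope in the definition of $\wass$ is compact and the cost is linear in $\coupling$, I will fix an optimal coupling $\coupling$ realizing $\wass(\mix,\mix')=\sum_{i,j}\coupling_{ij}\dTV(\denscmp_{i},\denscmp_{j}')$; the feasible set is nonempty (the independent coupling works) so the infimum is attained.

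For the density bound \eqref{lem:wass:bound:denscmp} I would use a one-line pigeonhole with no separation hypothesis. Fixing $j$, since $\sum_{i}\coupling_{ij}=\wgt_{j}'\ge\omega$ there is an index $i^{*}$ with $\coupling_{i^{*}j}\ge\wgt_{j}'/K$, and because $\coupling_{i^{*}j}\dTV(\denscmp_{i^{*}},\denscmp_{j}')$ is a single nonnegative summand of the total cost $\wass(\mix,\mix')$, this forces $\dTV(\denscmp_{i^{*}},\denscmp_{j}')\le (K/\omega)\wass(\mix,\mix')$; taking $\inf_{i}$ and then $\sup_{j}$ finishes it.

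The weight bound \eqref{lem:wass:bound:wgt} is the substance of the lemma, and the separation is genuinely needed — reallocating mass between two nearly identical atoms of $\mix'$ changes $\wass$ by only $O(\gamma)$ while changing matched weights by $\Theta(1)$ — so I would split into cases. If $\wass(\mix,\mix')\ge\gamma\omega/2$ the bound is immediate since the left-hand side is at most $1$. Otherwise, set $\rho:=\gamma/2$ and let $M:=\sum_{(i,j):\dTV(\denscmp_{i},\denscmp_{j}')\ge\rho}\coupling_{ij}$ be the mass transported between $\rho$-separated pairs; Markov's inequality applied to the optimal cost gives $M\le\wass(\mix,\mix')/\rho=2\wass(\mix,\mix')/\gamma<\omega$. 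The ``good neighborhoods'' $B_{j}:=\{i:\dTV(\denscmp_{i},\denscmp_{j}')<\rho\}$ are pairwise disjoint (by the $\gamma$-separation of the $\denscmp_{j}'$), and each is nonempty: if $B_{j_{0}}=\emptyset$ then every unit of the $\wgt_{j_{0}}'$ mass entering atom $j_{0}$ is ``bad,'' forcing the contradiction $\wgt_{j_{0}}'\le M<\omega\le\wgt_{j_{0}}'$. So $B_{1},\dots,B_{K}$ are $K$ nonempty disjoint subsets of the $K$-element index set of $\mix$, hence each is a singleton $B_{j}=\{\tau(j)\}$ for a permutation $\tau$. Then a mass-accounting step closes it: $\wgt_{\tau(j)}-\coupling_{\tau(j),j}=\sum_{j'\ne j}\coupling_{\tau(j),j'}$ is entirely bad mass (since $\tau(j)\notin B_{j'}$ for $j'\ne j$) hence $\le M$, and likewise $\wgt_{j}'-\coupling_{\tau(j),j}=\sum_{i\ne\tau(j)}\coupling_{ij}\le M$; subtracting gives $|\wgt_{\tau(j)}-\wgt_{j}'|\le M\le 2\wass(\mix,\mix')/\gamma$, and $\sup_{j}\inf_{i}$ yields \eqref{lem:wass:bound:wgt}. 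Taking $C=2K/(\gamma\omega)$ then serves both bounds.

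The hard part is the weight bound: one has to realize that $\gamma$ must appear in $C$, pick the threshold $\rho$ so the $B_{j}$ remain disjoint, and push through the counting argument upgrading ``every $B_{j}$ is nonempty'' to ``every $B_{j}$ is a singleton'' — this upgrade is exactly what converts the easy aggregate fact (the total $\mix$-weight near $\denscmp_{j}'$ is close to $\wgt_{j}'$) into the pointwise $\sup_{j}\inf_{i}$ statement demanded by the lemma. The density bound and the large-$\wass$ case are routine.
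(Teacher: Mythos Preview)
Your argument is correct. The density bound via pigeonhole on the optimal coupling and the weight bound via the separation-threshold/permutation argument both go through; the ``subtracting'' step really does give $|\wgt_{\tau(j)}-\wgt_{j}'|\le M$ (not $2M$) since both $\wgt_{\tau(j)}-\coupling_{\tau(j),j}$ and $\wgt_{j}'-\coupling_{\tau(j),j}$ lie in $[0,M]$, and your combined constant $C=2K/(\gamma\omega)$ dominates the individual constants because $\gamma\le 1$ and $K\ge 1$.

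Your route is, however, quite different from the paper's. The paper does not argue either inequality directly: it disposes of the weight bound by citing Theorem~4 of \citet{gibbs2002} and declares the density bound ``standard.'' Your proof is entirely self-contained and, more usefully, produces an explicit constant $C=2K/(\gamma\omega)$ with transparent dependence on the minimum weight $\omega$ and minimum atom separation $\gamma$ of $\mix'$. This is strictly more informative than the citation, and your remark that the separation $\gamma$ \emph{must} appear in the constant for \eqref{lem:wass:bound:wgt} (since nearly coincident atoms of $\mix'$ can swap mass at $O(\gamma)$ Wasserstein cost while moving weights by $\Theta(1)$) is a point the paper's proof leaves implicit. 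The trade-off is length: the paper's two-line proof is terse to the point of being a black box, while yours takes a paragraph but actually shows why the lemma is true.
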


\begin{proof}
The first inequality \eqref{lem:wass:bound:wgt} follows from Theorem~4 in \citet{gibbs2002}, and the second inequality \eqref{lem:wass:bound:denscmp} is standard.
\end{proof}

\bibliography{sslpermbib} 

\begin{thebibliography}{37}
\providecommand{\natexlab}[1]{#1}
\providecommand{\url}[1]{\texttt{#1}}
\expandafter\ifx\csname urlstyle\endcsname\relax
  \providecommand{\doi}[1]{doi: #1}\else
  \providecommand{\doi}{doi: \begingroup \urlstyle{rm}\Url}\fi

\bibitem[Aragam et~al.(2016)Aragam, Amini, and Zhou]{aragam2016}
B.~Aragam, A.~A. Amini, and Q.~Zhou.
\newblock Learning directed acyclic graphs with penalized neighbourhood
  regression.
\newblock arXiv:1511.08963, 2016.

\bibitem[Aragam et~al.(2018)Aragam, Dan, Ravikumar, and Xing]{aragam2018npmix}
B.~Aragam, C.~Dan, P.~Ravikumar, and E.~Xing.
\newblock Identifiability of nonparametric mixture models and bayes optimal
  clustering.
\newblock \emph{arXiv preprint}, arXiv:1802.04397, 2018.

\bibitem[Azizyan et~al.(2013)Azizyan, Singh, Wasserman, et~al.]{azizyan2013}
M.~Azizyan, A.~Singh, L.~Wasserman, et~al.
\newblock Density-sensitive semisupervised inference.
\newblock \emph{The Annals of Statistics}, 41\penalty0 (2):\penalty0 751--771,
  2013.

\bibitem[Barndorff-Nielsen(1965)]{barndorff1965}
O.~Barndorff-Nielsen.
\newblock Identifiability of mixtures of exponential families.
\newblock \emph{Journal of Mathematical Analysis and Applications}, 12\penalty0
  (1):\penalty0 115--121, 1965.

\bibitem[Blum and Mitchell(1998)]{blum1998}
A.~Blum and T.~Mitchell.
\newblock Combining labeled and unlabeled data with co-training.
\newblock In \emph{Proceedings of the eleventh annual conference on
  Computational learning theory}, pages 92--100. ACM, 1998.

\bibitem[Castelli and Cover(1995)]{castelli1995}
V.~Castelli and T.~M. Cover.
\newblock On the exponential value of labeled samples.
\newblock \emph{Pattern Recognition Letters}, 16\penalty0 (1):\penalty0
  105--111, 1995.

\bibitem[Castelli and Cover(1996)]{castelli1996}
V.~Castelli and T.~M. Cover.
\newblock The relative value of labeled and unlabeled samples in pattern
  recognition with an unknown mixing parameter.
\newblock \emph{IEEE Transactions on information theory}, 42\penalty0
  (6):\penalty0 2102--2117, 1996.

\bibitem[Chen(1995)]{chen1995}
J.~Chen.
\newblock Optimal rate of convergence for finite mixture models.
\newblock \emph{Annals of Statistics}, pages 221--233, 1995.

\bibitem[Collier and Dalalyan(2016)]{collier2016}
O.~Collier and A.~S. Dalalyan.
\newblock Minimax rates in permutation estimation for feature matching.
\newblock \emph{The Journal of Machine Learning Research}, 17\penalty0
  (1):\penalty0 162--192, 2016.

\bibitem[Cozman et~al.(2003)Cozman, Cohen, and Cirelo]{cozman2003}
F.~G. Cozman, I.~Cohen, and M.~C. Cirelo.
\newblock Semi-supervised learning of mixture models.
\newblock In \emph{Proceedings of the 20th International Conference on Machine
  Learning (ICML-03)}, pages 99--106, 2003.

\bibitem[Dai et~al.(2017)Dai, Yang, Yang, Cohen, and Salakhutdinov]{dai2017}
Z.~Dai, Z.~Yang, F.~Yang, W.~W. Cohen, and R.~R. Salakhutdinov.
\newblock Good semi-supervised learning that requires a bad gan.
\newblock In \emph{Advances in Neural Information Processing Systems}, pages
  6513--6523, 2017.

\bibitem[Devroye et~al.(2013)Devroye, Gy{\"o}rfi, and Lugosi]{devroye2013}
L.~Devroye, L.~Gy{\"o}rfi, and G.~Lugosi.
\newblock \emph{A probabilistic theory of pattern recognition}, volume~31.
\newblock Springer Science \&amp; Business Media, 2013.

\bibitem[Flajolet et~al.(1992)Flajolet, Gardy, and Thimonier]{flajolet1992}
P.~Flajolet, D.~Gardy, and L.~Thimonier.
\newblock Birthday paradox, coupon collectors, caching algorithms and
  self-organizing search.
\newblock \emph{Discrete Applied Mathematics}, 39\penalty0 (3):\penalty0
  207--229, 1992.

\bibitem[Flammarion et~al.(2016)Flammarion, Mao, and Rigollet]{flammarion2016}
N.~Flammarion, C.~Mao, and P.~Rigollet.
\newblock Optimal rates of statistical seriation.
\newblock \emph{arXiv preprint arXiv:1607.02435}, 2016.

\bibitem[Fogel et~al.(2013)Fogel, Jenatton, Bach, and d'Aspremont]{fogel2013}
F.~Fogel, R.~Jenatton, F.~Bach, and A.~d'Aspremont.
\newblock Convex relaxations for permutation problems.
\newblock In \emph{Advances in Neural Information Processing Systems}, pages
  1016--1024, 2013.

\bibitem[Gibbs and Su(2002)]{gibbs2002}
A.~L. Gibbs and F.~E. Su.
\newblock On choosing and bounding probability metrics.
\newblock \emph{International statistical review}, 70\penalty0 (3):\penalty0
  419--435, 2002.

\bibitem[Hall and Zhou(2003)]{hall2003}
P.~Hall and X.-H. Zhou.
\newblock Nonparametric estimation of component distributions in a multivariate
  mixture.
\newblock \emph{Annals of Statistics}, pages 201--224, 2003.

\bibitem[Heinrich and Kahn(2015)]{heinrich2015}
P.~Heinrich and J.~Kahn.
\newblock Minimax rates for finite mixture estimation.
\newblock \emph{arXiv preprint arXiv:1504.03506}, 2015.

\bibitem[Ho and Nguyen(2016{\natexlab{a}})]{ho2016}
N.~Ho and X.~Nguyen.
\newblock On strong identifiability and convergence rates of parameter
  estimation in finite mixtures.
\newblock \emph{Electronic Journal of Statistics}, 10\penalty0 (1):\penalty0
  271--307, 2016{\natexlab{a}}.

\bibitem[Ho and Nguyen(2016{\natexlab{b}})]{ho2016singularity}
N.~Ho and X.~Nguyen.
\newblock Singularity structures and impacts on parameter estimation in finite
  mixtures of distributions.
\newblock \emph{arXiv preprint arXiv:1609.02655}, 2016{\natexlab{b}}.

\bibitem[K{\"a}{\"a}ri{\"a}inen(2005)]{kaariainen2005}
M.~K{\"a}{\"a}ri{\"a}inen.
\newblock Generalization error bounds using unlabeled data.
\newblock In \emph{International Conference on Computational Learning Theory},
  pages 127--142. Springer, 2005.

\bibitem[Kingma et~al.(2014)Kingma, Mohamed, Rezende, and
  Welling]{kingma2014ssl}
D.~P. Kingma, S.~Mohamed, D.~J. Rezende, and M.~Welling.
\newblock Semi-supervised learning with deep generative models.
\newblock In \emph{Advances in Neural Information Processing Systems}, pages
  3581--3589, 2014.

\bibitem[Kuhn(1955)]{kuhn1955}
H.~W. Kuhn.
\newblock The hungarian method for the assignment problem.
\newblock \emph{Naval Research Logistics (NRL)}, 2\penalty0 (1-2):\penalty0
  83--97, 1955.

\bibitem[Lim and Wright(2014)]{lim2014}
C.~H. Lim and S.~Wright.
\newblock Beyond the birkhoff polytope: Convex relaxations for vector
  permutation problems.
\newblock In \emph{Advances in Neural Information Processing Systems}, pages
  2168--2176, 2014.

\bibitem[Newman(1960)]{newman1960}
D.~J. Newman.
\newblock The double dixie cup problem.
\newblock \emph{The American Mathematical Monthly}, 67\penalty0 (1):\penalty0
  58--61, 1960.

\bibitem[Niyogi(2013)]{niyogi2013}
P.~Niyogi.
\newblock Manifold regularization and semi-supervised learning: Some
  theoretical analyses.
\newblock \emph{The Journal of Machine Learning Research}, 14\penalty0
  (1):\penalty0 1229--1250, 2013.

\bibitem[Pananjady et~al.(2016)Pananjady, Wainwright, and
  Courtade]{pananjady2016}
A.~Pananjady, M.~J. Wainwright, and T.~A. Courtade.
\newblock Linear regression with an unknown permutation: Statistical and
  computational limits.
\newblock In \emph{Communication, Control, and Computing (Allerton), 2016 54th
  Annual Allerton Conference on}, pages 417--424. IEEE, 2016.

\bibitem[Rigollet(2007)]{rigollet2007}
P.~Rigollet.
\newblock Generalization error bounds in semi-supervised classification under
  the cluster assumption.
\newblock \emph{Journal of Machine Learning Research}, 8\penalty0
  (Jul):\penalty0 1369--1392, 2007.

\bibitem[Seeger(2000)]{seeger2000}
M.~Seeger.
\newblock Learning with labeled and unlabeled data.
\newblock Technical report, 2000.

\bibitem[Singh et~al.(2009)Singh, Nowak, and Zhu]{singh2009}
A.~Singh, R.~Nowak, and X.~Zhu.
\newblock Unlabeled data: Now it helps, now it doesn't.
\newblock In \emph{Advances in neural information processing systems}, pages
  1513--1520, 2009.

\bibitem[Teicher(1961)]{teicher1961}
H.~Teicher.
\newblock Identifiability of mixtures.
\newblock \emph{The annals of Mathematical statistics}, 32\penalty0
  (1):\penalty0 244--248, 1961.

\bibitem[Teicher(1963)]{teicher1963}
H.~Teicher.
\newblock Identifiability of finite mixtures.
\newblock \emph{The annals of Mathematical statistics}, pages 1265--1269, 1963.

\bibitem[Teicher(1967)]{teicher1967}
H.~Teicher.
\newblock Identifiability of mixtures of product measures.
\newblock \emph{The Annals of Mathematical Statistics}, 38\penalty0
  (4):\penalty0 1300--1302, 1967.

\bibitem[van~de Geer and B{\"u}hlmann(2013)]{geer2013}
S.~van~de Geer and P.~B{\"u}hlmann.
\newblock $\ell_0$-penalized maximum likelihood for sparse directed acyclic
  graphs.
\newblock \emph{Annals of Statistics}, 41\penalty0 (2):\penalty0 536--567,
  2013.

\bibitem[Wasserman and Lafferty(2008)]{wasserman2008}
L.~Wasserman and J.~D. Lafferty.
\newblock Statistical analysis of semi-supervised regression.
\newblock In \emph{Advances in Neural Information Processing Systems}, pages
  801--808, 2008.

\bibitem[Yakowitz and Spragins(1968)]{yakowitz1968}
S.~J. Yakowitz and J.~D. Spragins.
\newblock On the identifiability of finite mixtures.
\newblock \emph{The Annals of Mathematical Statistics}, pages 209--214, 1968.

\bibitem[Zhu et~al.(2003)Zhu, Ghahramani, and Lafferty]{zhu2003}
X.~Zhu, Z.~Ghahramani, and J.~D. Lafferty.
\newblock Semi-supervised learning using gaussian fields and harmonic
  functions.
\newblock In \emph{Proceedings of the 20th International conference on Machine
  learning (ICML-03)}, pages 912--919, 2003.

\end{thebibliography}
\bibliographystyle{abbrvnat}
\end{document}